\newcommand{\sorted}[1]{{#1}_{\uparrow}}
\newcommand{\BBin}{\texttt{BetaBin}}
\newcommand{\prd}{p^{\texttt{rd}}}
\newcommand{\vprd}{\sorted{\vp}^{\texttt{rd}}}
\newcommand{\vp}{\mathbf{p}}
\title{
Class conditional conformal prediction for multiple inputs by p-value aggregation
}
\Crefname{equation}{Eq.}{Eqs.}
\Crefname{figure}{Fig.}{Figs.}
\Crefname{tabular}{Tab.}{Tabs.}
\author{%
  Jean-Baptiste Fermanian \\
  IMAG, IROKO, Univ Montpellier, Inria, CNRS,\\ Montpellier, France\\
  \texttt{jean-baptiste.fermanian@inria.fr} \\ \And
    Mohamed Hebiri\\
  LAMA, Université Gustave Eiffel,\\ Paris, France\\
  \texttt{Mohamed.Hebiri@univ-eiffel.fr} \\ \And
  Joseph Salmon \\ 
  IMAG, IROKO, Univ Montpellier, Inria, CNRS,\\ Montpellier, France\\
  \texttt{joseph.salmon@inria.fr} \\
}
\begin{document}

\maketitle

\begin{abstract}
Conformal prediction methods are statistical tools designed to quantify uncertainty and generate predictive sets with guaranteed coverage probabilities. This work introduces an innovative refinement to these methods for classification tasks, specifically tailored for scenarios where multiple observations (multi-inputs) of a single instance are available at prediction time. Our approach is particularly motivated by applications in citizen science, where multiple images of the same plant or animal are captured by individuals.
Our method integrates the information from each observation into conformal prediction, enabling a reduction in the size of the predicted label set while preserving the required class-conditional coverage guarantee. The approach is based on the aggregation of conformal p-values computed from each observation of a multi-input. 
By exploiting the exact distribution of these p-values, we propose a general aggregation framework using an abstract scoring function, encompassing many classical statistical tools. Knowledge of this distribution also enables refined versions of standard strategies, such as majority voting.
We evaluate our method on simulated and real data, with a particular focus on Pl@ntNet, a prominent citizen science platform that facilitates the collection and identification of plant species through user-submitted images.
\end{abstract}
\section{Introduction}
As modern algorithms and the data they process become increasingly complex, the importance of quantifying their uncertainty has grown significantly. One systematic method to meet this requirement is \textit{(split) conformal prediction}, which leverages a scoring function and labeled calibration data to generate a set of potential outputs for a new instance~\citep{vovk2005algorithmic}.
For the users, the size of this prediction set is intended to reflect the algorithm’s uncertainty. 
This approach requires the assumption that the calibration data and new observations are exchangeable, (\ie invariant by permutations) to get a prescribed coverage. 
In classification tasks, the conformal set usually comprises a subset of labels that, with a high degree of prescribed confidence, includes the true label of the new observation (see \citealp{angelopoulos2023conformal,fontana2023conformal} for overviews of conformal prediction and \citealp{chzhen2021set} for set valued classification).

In this work, we focus on a setting where each input to be classified (at testing) consists of multiple observations, each of which can be independently processed prior to aggregation.
This methodology is commonly employed in machine learning-driven citizen science applications, such as Pl@ntNet \citep{joly:hal-00908872}, where users are prompted to submit one or more images of a plant for identification.
Similar multi-input samples are also prevalent in iNaturalist \citep{van2018inaturalist} or in eBird \citep{sullivan2009ebird}, the latter managing multi-input sounds of birds, rather than images.
Although multiple images may be submitted, algorithms often process them independently and only aggregate the resulting (softmax) predictions.
A naive conformal prediction step based on such aggregated scores severely disrupts the exchangeability. This results in excessively large prediction sets, rendering them impractical (see Appendix~\ref{app:fail_naive}).

Simultaneously, we aim at class conditional coverage calling which would need a sufficiently large number of calibration examples for each class to reliably estimate how aggregation behaves. 
In datasets with a highly imbalanced class distribution, such as those commonly found in citizen science datasets with many underrepresented classes (see, for example, \citealp{Garcin_Joly_Bonnet_Affouard_Lombardo_Chouet_Servajean_Lorieul_Salmon2021}), achieving this requirement is not feasible.

To overcome this limitation, we propose to perform conformal prediction at the level of individual observations, and to aggregate the resulting conformal prediction sets. We frame this problem as one of p-value aggregation, where the p-values are the conformal p-values computed for each individual observation. Using recent results on the distribution of these p-values, we develop a general framework for aggregating them using score functions, which encompasses classical techniques such as p-value correction, majority voting, or template matching.

\paragraph{Setting.}
We consider the scenario of multi-class classification with $K$ classes.
We have at disposal an already trained classifier $\cA: \cX \to [0,1]^K$, a \textit{calibration set} of $n$ labeled points $\set{(X_i,Y_i)}_{i=1}^n$, and multiple observations $\set{X_{n+j}}_{j=1}^m$ of the \textit{same} item from the (unknown) class $Y_{n+1}$.
Our objective is to build a set $\cC_\alpha$ depending on the classifier, the calibration set, and the new points satisfying for a fixed $\alpha\in(0,1)$ and for all $y\in\intr{K}$:
\begin{equation}\label{eq:obj}
    \prob{ y \in \cC_\alpha\paren[2]{\cA,\set{(X_i,Y_i)}_{i=1}^n,\set{X_{n+j}}_{j=1}^m} \Big| Y_{n+1} = y } \geq 1 - \alpha\enspace.
\end{equation}
To get an informative set, $\cC_\alpha$ is intended to be as small as possible.
\paragraph{Notation.} For two integers $n\leq m$, we denote $\intr{n:m} := \set{n,\ldots,m}$ and $\intr{n} := \intr{1:n}$. We set
$s : \cX \times \intr{K} \to \mbr$ to be a score function and $S_i = s(X_i,Y_i)$ for $i\in\intr{n}$ are the calibration scores. For a label $y \in \intr{K}$, we denote $n_y = \abs{\set{ i \in  \intr{n}: Y_i=y}}$ the number of calibration points of class $y$ (supposed fixed in most of our results) and $S_1^y, \ldots, S_{n_y}^y$ the scores of these points. We denote $S_{n+i}(y) := s(X_{n+i},y)$ the score of a label $y$ at the test point $X_{n+i}$ for $i\in \intr{m}$.

\paragraph{Contributions.}


We develop a framework for constructing a conformal prediction set that is class conditionally valid \eqref{eq:obj}, by aggregating multiple observations from the same class.
This approach can be viewed as stemming from tests that assess the exchangeability between the observations and the points within each class.
It has been made possible through the adaptation of recent findings on the joint distribution of conformal p-values, as discussed  \citep{Gazin2023TransductiveCI}. Our contributions are summarized below:

\textbf{1.} The prediction set is built from class conditional conformal p-values for which we give the exact distribution. Notably, our construction explicitly handles \textit{exchangeability} and, in particular, \textit{ties} between the conformal scores—a frequent occurrence with softmax outputs from neural networks, yet often overlooked in the literature. This probabilistic result serves as a key ingredient in the contributions that follow (\Cref{se:preliminaries}).

\textbf{2.} We relate our method to the prediction set aggregation framework of \citet{gasparin2024merging}, and propose an enhancement tailored to the setting where the aggregated sets are class-conditional conformal sets constructed from a shared calibration set (\Cref{sec:majvote}).

\textbf{3.} We propose a new efficient way to aggregate class-conditional conformal p-values by constructing a region of reject based on the construction of a score $v$ on the p-values. (\Cref{sec:pvalue_agg}).

\textbf{4.} We test our methods on a classification on a synthetic dataset of mixture of Gaussian distributions and on \texttt{LifeCLEF Plant Identification Task 2015} \citep{goeau:hal-01182795}. In all the experiments, the classifiers are neural networks with softmax output (\Cref{sec:experiments}).

\section{Background and related works}

\subsection{Overview on classical conformal prediction}

Conformal Prediction (CP) is a framework introduced by \citet{vovk2005algorithmic} to construct distribution-free prediction sets quantifying the uncertainty in the predictions of an algorithm.
In this context, $\alpha \in (0,1)$ represents the significance level, and $1-\alpha$ denotes the targeted coverage probability.
Formally, given a calibration set $\{(X_{i}, Y_{i})\}_{i \in \intr{n}}$ of points in $\cX \times \intr{K}$ and a trained classifier, a CP method constructs for a new point $X_{n+1} \in \cX$ a \textit{marginally valid} set $\cC_\alpha$, \emph{i.e.,} a set containing the unobserved outcome $Y_{n+1} \in \intr{K}$ with high probability:
\begin{align}\label{eq:cov_splitCP}
    \prob{ Y_{n + 1} \in \cC_\alpha(X_{n+1})} \geq 1 - \alpha \enspace .
\end{align}
The idea behind CP methods is to first build non-conformity scores $S_i := s(X_{i},Y_{i})$ on the calibration set, for a given score function $s:\cX \times \intr{K} \rightarrow \mbr$ intended to quantify the error of the algorithm at a given point. Many score functions have been considered to catch different type of information, see \eg \citet{angelopoulos2023conformal} for a review.
A common choice in classification is to use (one minus) the estimated probability of the class (\eg the softmax output of neural network).
In this paper, we will focus on the case where an already trained algorithm is provided, which corresponds to the case of split CP. 
Then, for a score function $s$, the prediction set
\begin{align}
\label{eq:set_conf}
\cC_\alpha(x) 
:= \set[2]{ y \in \intr{K} \,:\, s(x, y) \leq {S}_{(\lceil (1-\alpha)(n + 1) \rceil)}}
\enspace ,
\end{align} 
is marginally valid (as defined in \Cref{eq:cov_splitCP}) where ${S}_{(k)}$ is the $k$-th smallest score of $S_1, \ldots, S_{n}, \infty$ \citep{vovk2005algorithmic}. 

In certain applications, it is desirable to construct prediction sets that satisfy a \textit{class conditional} coverage guarantee; that is, for every class $y$, the following condition holds:
\begin{align}\label{eq:covcd_splitCP}
\prob{y \in \cC_{\alpha}(X_{n+1}) \mid Y_{n+1} = y} \geq 1 - \alpha  \enspace.
\end{align}
While this condition implies marginal coverage, the converse does not hold. In scenarios with highly imbalanced or heterogeneous classes, marginal split CP can lead to prediction sets with significant variability in class-wise coverage \citep{sadinle2019least,ding2023class}. To enforce conditional coverage as specified in \eqref{eq:covcd_splitCP}, a natural approach is to apply split CP separately for each class \citep{vovk2005algorithmic}. The resulting prediction set is given by
\begin{equation}\label{eq:def_cond_set}
\cC_\alpha^{cd}(x) = 
\set{ y : s(x,y) \leq  {S}^y_{(\lceil{ (1-\alpha) (n_y+1) \rceil})}} \enspace,
\end{equation}
which is conditionally valid, with $\left\{S_i^y\right\}_{i\in\intr{n_y}}$ being the scores of the $n_y$ points with label $y$ in the calibration set. A limit of this approach is that it requires to have at least $\tfrac{1}{\alpha}-1$ points for each class \citep{ding2023class}.
Otherwise, such a class would always be predicted which can lead to uninformative prediction set.

\subsection{P-value point of view of conformal prediction}

The conformal set  \eqref{eq:set_conf} can be interpreted as the outcome of a family of two-sample tests, each assessing—for every $y \in \intr{K}$—whether $s(X_{n+1}, y)$ is exchangeable with the scores of the calibration.
The predicted classes $y\in \cC_\alpha$ are those not rejected by a Wilcoxon-Mann-Whitney 
test (between a sample of size $1$ and of size $n$, \citealp{wilcoxon1945individual,mann1947test}). One can associate a so-called split conformal p-value \citep{vovk2005algorithmic} to split CP, defined as
\begin{equation}\label{eq:def_pvalue_classic}
p^{mrg}_{n+1}(y) = \frac{1}{n+1}\brac{\sum_{i=1}^{n} \bm{1}\set{s(X_{n+1},y) \leq S_i} + 1} \enspace,
\end{equation}
such that the condition $s(X_{n+1}, y) \leq {S}_{(\lceil (1-\alpha)(n + 1) \rceil)}$ in \Cref{eq:set_conf} is equivalent to $p_{n+1}(y) \geq \alpha$. For the true label $Y_{n+1}$, assuming the scores have \textit{no ties}, the p-value of the true label $p(Y_{n+1})$ follows a uniform distribution over $\set{ i/(n+1): i \in \intr{n+1}}$. This distribution is explicit and independent of the underlying data distribution, which allows for effective identification of unlikely candidate labels. The construction of this distribution-free quantity forms the basis of conformal prediction (see \citealp{barber2025unifying} for a recent p-value point of view). 

Similarly, for class-conditional prediction sets~\eqref{eq:def_cond_set}, a class-conditional p-value can be defined as $p^{cd}_{n+1}(y) = \frac{1}{n_y+1}\brac{\sum_{i=1}^{n_y} \bm{1}\set{s(X_{n+1},y) \leq S_i^y} + 1}$. As in the marginal case, this set can be interpreted as arising from a test of exchangeability between the test score and the calibration scores for the corresponding class.

\subsection{Joint control of the p-values.} In our setting, where multiple images $X_{n+1},\ldots,X_{n+m}$ with the same label $Y_{n+1}$ are observed, our approach relies on constructing vectors of conformal p-values $\vp(y) = (p_1(y), \ldots, p_m(y))$,
where $p_j(y)$ is the class-conditional p-value associated with $X_{n+j}$ for class $y$. One question is to find a set $E \subset [0,1]^m$ that contains the vector $\vp(Y_{n+1})$ with high probability:
\begin{equation}\label{eq:geneq_env}
\prob{ \vp(Y_{n+1}) \in E } \geq 1 - \alpha\enspace.
\end{equation}
For $m=1$, conformal prediction chooses $E=[\alpha,1]$, but, for $m$ larger, a wide range of choices become possible. This question has been extensively studied across various contexts and for multiple purposes.
In the realm of multiple testing, numerous methods have been developed to control the false discovery proportion (FDP), often leading to criteria interpretable as statistical envelope: For instance \citet{bonferroni1936teoria,ruger1978maximale,ruschendorf1982random,vovk2022admissible} address scenarios involving arbitrarily dependent p-values.; \citet{fisher1925statistical,pearson1934new,simes1986improved,benjamini1995controlling} focus on settings with independent p-values. Additionally, \citet{sarkar1998some,benjamini2001control} provide results under positive dependence assumptions while the exchangeable case has been recently considered by \citet{gasparin2025combining}. 
\citet{Blanchard2020PostHC} (see also \citealp{li2024simultaneous}) introduce the notion of \textit{template}, which most closely aligns with the envelope perspective we adopt. 
In the framework of conformal prediction, controls of the form~\eqref{eq:geneq_env} have been employed across various applications, including outlier detection using conformal scores~\citep{bates2023testing,Gazin2023TransductiveCI}, batch prediction~\citep{gazin2024powerful,lee2024batch}, aggregation of prediction intervals~\citep{gasparin2024merging}, and conformal prediction for ranking~\citep{fermanian2025transductive}.
All these methods can be summarized as constructing a score function $v$ of the p-values such that $\vp \in E \iff v(\vp) \geq q_E$, where $q_E$ is a threshold specific to each method.
\Cref{tab:envelopes} provides an overview of these methods, the associated score functions $v$, and a concise summary of the required dependencies among the p-values 
to ensure the guarantee \eqref{eq:geneq_env}. 

\begin{table}[h]
\centering
\begin{threeparttable}
\begin{tabular}{l l l c}
\toprule
 & \textbf{Set $E \subset [0,1]^m$} & \bm{$v(p)$} & \textbf{Dependence} \\
\toprule
Bonferroni & $\{\vp : \forall j,\ p_j \geq \alpha/m\}$ & $\min p_j$ & -- \\
\midrule
Simes & $\{\vp : \forall j,\ p_{(j)} \geq j\alpha/m\}$ & $\min_j \nicefrac{m p_{(j)}}{j}$ & PRDS\\
\midrule
Template & $\{\vp : \forall j,\ p_{(j)} \geq t_j(\lambda_\alpha)\}$ & $\min_j t_j^{-1}(p_{(j)})$ & -- \\
\midrule
\multirow{2}{*}{\shortstack{Majority vote}}
  & $\{\vp : \sum_{j=1}^m \bm{1}_{\{p_j \geq \alpha/2\}} \geq m/2\}$ & $\wh{F}_m^{-1}(1/2)$ & -- \\
\cmidrule(lr){2-4}
  & $\{\vp : \forall k,\ \sum_{j=1}^k \bm{1}_{\{p_j \geq \alpha/2\}} \geq k/2\}$ & $\min_k \wh{F}_k^{-1}(1/2)$ & Exchangeability \\
\bottomrule
\end{tabular}
\caption{Examples of sets satisfying \eqref{eq:geneq_env} and the associated score function. $\wh{F}_k$ denotes the empirical CDF of $(p_1, \ldots, p_k)$. Template method refers to \citet{Blanchard2020PostHC} where $t_j$ denotes a family of functions and $\lambda_\alpha > 0$ is a parameter. Majority vote refers to the methods analyzed by \citet{gasparin2024merging}. PRDS denotes Positive Regression Dependence on a Subset \citep{benjamini2001control}.}
\label{tab:envelopes}

\end{threeparttable}
\end{table}
\vspace{-1em}
\section{Joint distribution of the conditional p-values}\label{se:preliminaries}
The present section introduces fundamental mathematical tools that will be required to derive conformal predictors (based on aggregation or on p-values) with the right validity in \Cref{sec:majvote} and \Cref{sec:pvalue_agg}.
We first present 
some key lemma about a uniform distribution over the ranks. We then link this distribution to the joint distribution of conditional p-values associated to each observation $X_{n+i}$. Finally, we derive the distribution of some key statistics.

\paragraph{Key lemma.}
Let $A_{n,m}$ for $n,m \in \mbn^*$, be the set of ordered discrete p-values:
\begin{equation}\label{eq:def_Anm}
    A_{n,m} := \set[2]{ a \in \set[2]{ \frac{i}{n}: 0\leq i \leq n} ^m : a_1\leq \ldots \leq a_m}\enspace.
\end{equation}
\begin{lemma}\label{lem:keylemma}
    Let $n,m \in \mbn^*$ and $P \sim \cU(A_{n,m})$, where $\cU$ denotes the uniform distribution. Then
    \begin{itemize}
        \item $n \cdot P(j) \sim \BBin(n,j,m-j+1)$ for $j\in\intr{m}$.
        \item $\sum_{j=1}^m \bm{1}_{\{ n\cdot P(j) \geq \ell\}} \sim \BBin(m,n+1 - \ell, \ell ) $ for $\ell \in \intr{0:n}$.
    \end{itemize}
\end{lemma}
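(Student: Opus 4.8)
The plan is to prove both identities by direct combinatorial enumeration of the finite set $A_{n,m}$, exploiting that $P \sim \cU(A_{n,m})$ places equal mass on each ordered tuple. Identifying $a_j = i/n$ with the integer $i \in \intr{0:n}$, the set $A_{n,m}$ is in bijection with non-decreasing integer sequences $0 \le q_1 \le \cdots \le q_m \le n$, equivalently multisets of size $m$ from the $n+1$ values $\set{0,\ldots,n}$. First I would record the normalizing count by a stars-and-bars argument, $\abs{A_{n,m}} = \binom{n+m}{m}$, so that every probability below is a ratio of lattice-point counts over this common denominator.

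For the first claim, writing $Q_j := n\cdot P(j)$, I would compute $\prob{Q_j = k}$ for $k \in \intr{0:n}$ by splitting an ordered tuple whose $j$-th coordinate equals $k$ into an initial block $q_1 \le \cdots \le q_{j-1} \le k$ and a terminal block $k \le q_{j+1} \le \cdots \le q_m$. Stars and bars counts these blocks independently, yielding $\binom{k+j-1}{j-1}$ admissible prefixes valued in $\set{0,\ldots,k}$ and $\binom{n-k+m-j}{m-j}$ admissible suffixes valued in $\set{k,\ldots,n}$, so that
\[
\prob{Q_j = k} = \frac{\binom{k+j-1}{j-1}\binom{n-k+m-j}{m-j}}{\binom{n+m}{m}}\enspace.
\]
It then remains to check this equals the $\BBin(n,j,m-j+1)$ mass function; rewriting both the binomials and the Beta-function ratio $B(k+j,\,n-k+m-j+1)/B(j,\,m-j+1)$ in terms of factorials reduces the identity to a routine cancellation.

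The second claim follows the same recipe, with a threshold replacing a fixed coordinate. Since the tuple is non-decreasing, the event $\set{T_\ell = t}$, where $T_\ell := \sum_{j=1}^m \bm{1}\set{Q_j \ge \ell}$, is exactly $\set{q_{m-t} \le \ell-1,\ q_{m-t+1} \ge \ell}$: the smallest $m-t$ entries lie in $\set{0,\ldots,\ell-1}$ and the largest $t$ in $\set{\ell,\ldots,n}$. Counting each block by stars and bars gives
\[
\prob{T_\ell = t} = \frac{\binom{m-t+\ell-1}{m-t}\binom{t+n-\ell}{t}}{\binom{n+m}{m}}\enspace,
\]
and the same factorial bookkeeping identifies this with $\BBin(m,\,n+1-\ell,\,\ell)$. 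The degenerate boundary $\ell = 0$, where $Q_j \ge 0$ always and hence $T_0 = m$ almost surely, matches $\BBin(m,\,n+1,\,0)$ (a point mass at $m$) and I would flag it separately, since the factorial form invokes $(\ell-1)!$.

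I expect the delicate points to be bookkeeping rather than conceptual: getting the block decompositions and their index ranges exactly right at the extremes ($j \in \set{1,m}$, $t \in \set{0,m}$, with empty blocks contributing a count of $1$), and confirming the factorial identity matches the chosen parametrization of $\BBin$. The conceptual core—that uniformity over ordered multisets converts marginal order-statistic counts into Beta-Binomial masses via stars and bars—is straightforward; the main risk is an off-by-one error in translating between $i/n$ and $i$, or in the Beta-function parameters.
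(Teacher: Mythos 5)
Your proposal is correct and follows essentially the same route as the paper's proof: both arguments enumerate $A_{n,m}$ directly, splitting each non-decreasing tuple into a prefix and a suffix block at the $j$-th coordinate (resp.\ at the threshold $\ell$) and counting each block by stars and bars, yielding exactly the binomial-coefficient ratios $\binom{k+j-1}{j-1}\binom{n-k+m-j}{m-j}/\binom{n+m}{m}$ and $\binom{m-t+\ell-1}{m-t}\binom{t+n-\ell}{t}/\binom{n+m}{m}$ that the paper obtains, which match the $\BBin$ mass function stated in the paper's remark without further Beta-function manipulation. Your explicit treatment of the degenerate case $\ell=0$ (a Dirac at $m$, consistent with the convention $\BBin(m,n+1,0)$) is a minor point the paper's proof leaves implicit, but otherwise the two arguments coincide.
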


\begin{remark}
The beta-binomial distribution $\BBin(m,a,b)$ is the distribution of a binomial random variable $\cB(m,Q)$ where $Q$ is drawn independently as a beta distribution $B(a,b)$. If $a$ and $b$ are positive integers, then for $k\in \intr{0:m}$, 
\begin{equation*}
    \prob[2]{ \BBin(m,a,b) = k}= \frac{\binom{k+a-1}{k} \binom{m-k+b -1}{m-k}}{\binom{m-1+a+b}{m}}\enspace.
\end{equation*}
In this case, the distribution $\BBin(m,a,b)$ is a negative hypergeometric distribution which can be easily simulated using Polya's urns. For $a=0$ (resp. $b=0$), the distribution is defined as a Dirac in $0$ (resp. in $m$).
\end{remark}

\paragraph{Joint distribution of the conditional p-values.}
We focus here on the joint distribution of the ordered vector of p-values $\sorted{\vp}(y) = (p_{(1)}(y), \ldots, p_{(m)}(y))$, where for $j\in \intr{m}$ and $y\in\intr{K}$
\begin{equation}\label{eq:def:pvalue}
    p_j(y) = \frac{1}{n_y} \sum_{i=1}^{n_y} \bm{1}\set{ s(X_{n+j},y)\leq S_i^y }\enspace.
\end{equation}
The guarantees we obtain on this vector are under the condition that the scores of the new points evaluated in the true label are exchangeable with the scores of the same class: 
\begin{assumption}\label{ass:exchangeability}
For all $y \in \intr{K}$ and conditionally to $Y_{n+1} = y$, the vector of scores $\paren{S_1^y,\ldots, S^y_{n_y}, S_{n+1}(y),\ldots,S_{n+m}(y)}$ is exchangeable.
\end{assumption}
This hypothesis is fundamental to our work. It assumes that the scores of the observed images behave similarly to the calibration scores of the same class. This assumption is more discussed in Section~\ref{sec:exchangebability}.

\begin{remark}
The number of multi-inputs $m$ is formally assumed to be fixed throughout the paper. However, the method remains directly applicable when $m$ varies. In that case, if Assumption~\ref{ass:exchangeability} holds conditionally on $m$, then our results remain valid conditionally on $m$.
\end{remark}

The distribution of the unordered vector of $p$-values has been deeply studied by \citet{Gazin2023TransductiveCI,gazin2024asymptotics} in the general case where the p-values are the marginal ones, the test points do not all share the same class, and the scores have no ties. The following proposition can be seen as an extension of these results conditionally to the class, which includes the possibility of \textit{ties} between scores. To this purpose, we introduce a randomized version $\vprd$ of the vector of conformal p-values~$\vp$.

\begin{theorem}\label{prop:pvalue_distrib}
Under Assumption~\ref{ass:exchangeability}, for $y\in \intr{K}$ and $j\in \intr{m}$, let us introduce
\begin{equation}\label{eq:def_pvalue}
    \prd_j(y) = \frac{1}{n_y} \sum_{i=1}^{n_y} \brac[2]{ \bm{1}\set{ S_{n+j}(y) <S_i^y } + \bm{1}\set{ S_{n+j}(y) =S_i^y }\bm{1}\set{  U_j \leq U_i^y } }\enspace,
\end{equation}
where $(U_i)_{i\in \intr{m}}$ and $\paren{U^y_{i}}_{i\in \intr{n_y}}$ for $y\in \intr{K}$ are i.i.d. uniform random variables on $[0,1]$. Then the ordered vector of randomized p-values $\vprd(y) = \paren{ \prd_{(1)}(y), \ldots, \prd_{(m)}(y)}$,  conditionally to $Y_{n+1}=y$, is uniformly drawn from $A_{n_y,m}$:
\begin{equation}\label{eq:pvalue_distrib}
    \vprd(Y_{n+1}) \,|\, \paren{Y_{n+1} = y }\sim \cU\paren{ A_{n_y,m}} \enspace .
\end{equation}
Moreover, for $j\in \intr{m}$, we have $ n_y \cdot \prd_{(j)}(Y_{n+1}) \,|\, \paren{Y_{n+1} = y }\sim   \BBin(n_y,j,m-j+1)$.
\end{theorem}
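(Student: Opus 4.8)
The plan is to derive the marginal beta-binomial statement as an immediate corollary of the uniform law \eqref{eq:pvalue_distrib}: once $\vprd(Y_{n+1}) \mid (Y_{n+1}=y) \sim \cU(A_{n_y,m})$ is established, the first bullet of \Cref{lem:keylemma} applied with $n=n_y$ gives $n_y \cdot \prd_{(j)}(Y_{n+1}) \mid (Y_{n+1}=y) \sim \BBin(n_y, j, m-j+1)$ directly. So the whole argument reduces to the uniformity claim, and everything is done conditionally on $Y_{n+1}=y$. First I would linearize the tie-breaking by passing to augmented scores: set $\tilde S_i = (S_i^y, U_i^y)$ for $i\in\intr{n_y}$ and $\tilde S_{n+j} = (S_{n+j}(y), U_j)$ for $j\in\intr{m}$, and equip $\mbr\times[0,1]$ with the lexicographic order. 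Reading off \eqref{eq:def_pvalue}, almost surely
\[
 n_y \cdot \prd_j(y) = \#\set{ i \in \intr{n_y} : \tilde S_{n+j} <_{\mathrm{lex}} \tilde S_i } ,
\]
since the first indicator accounts for a strictly smaller score, and the second counts an equal score resolved in favour of the calibration point via the uniforms (the boundary event $U_j = U_i^y$ having probability zero). Thus $n_y\,\prd_j(y)$ is exactly the number of calibration points ranking above test point $j$ in the combined sample of $n_y+m$ augmented scores.

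The key structural observation is then that this augmented vector is exchangeable and almost surely tie-free. Exchangeability follows because the scores are exchangeable by \Cref{ass:exchangeability} and the uniforms are i.i.d. and independent of them, so the joint law of the paired vector is invariant under any common permutation of the $n_y+m$ indices; distinctness holds because the uniforms are a.s. pairwise distinct, so $<_{\mathrm{lex}}$ is a strict total order on the sample a.s. Consequently the set of ranks occupied by the $m$ test points among the $n_y+m$ augmented scores is a uniformly random $m$-subset of $\intr{n_y+m}$.

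It remains to transport this uniform law through the deterministic map sending a rank configuration to the ordered p-value vector, and the hard part is the combinatorial bookkeeping. I would show that if the $i$-th smallest test point occupies overall rank $r_i$, then the number of calibration points strictly above it equals $n_y - r_i + i$, a quantity nonincreasing in $i$ and lying in $\intr{0:n_y}$; reversing and dividing by $n_y$ yields an element of $A_{n_y,m}$. The resulting map from $m$-subsets $\set{r_1 < \ldots < r_m}$ of $\intr{n_y+m}$ to $A_{n_y,m}$ is a bijection, its inverse sending a nondecreasing count sequence back to strictly increasing ranks (the monotonicity being exactly what guarantees strictness). Since both sets have cardinality $\binom{n_y+m}{m}$, this bijection pushes the uniform law on $m$-subsets forward to $\cU(A_{n_y,m})$, which is \eqref{eq:pvalue_distrib}. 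The only genuine obstacle is this indexing/monotonicity verification; all the probabilistic content sits in the exchangeability-plus-no-ties reduction of the previous step.
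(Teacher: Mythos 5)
Your proof is correct, and it reaches the paper's conclusion by a genuinely different route for the central uniformity step, though the two tie-handling devices are mathematically the same trick in different clothing. The paper breaks ties by passing to proxy scores $\wt{S}_i^y = S_i^y + U_i^y\delta$ and $\wt{S}_{n+j} = S_{n+j}(y) + U_j\delta$, where $\delta$ is the minimal nonzero gap between scores; this is exactly a numeric realization of your lexicographic order on the augmented pairs $(S,U)$, and both yield $n_y\,\prd_j(y)$ as the count of calibration points strictly above test point $j$ in an exchangeable, a.s.\ tie-free sample (your version has the mild advantage of not needing $\delta$ or the special case where all scores coincide). The divergence is in what comes next: the paper invokes its Corollary~\ref{cor:unif_pvalue_gen}, which is imported from Theorem~A.1 of \citet{Gazin2023TransductiveCI} (uniformity of the p-value \emph{histogram} on $H_{n,m}$, transported to $A_{n,m}$ via the explicit bijection $\phi$), whereas you re-derive the uniform law from first principles: an exchangeable tie-free vector has ranks uniform over permutations, hence the test points occupy a uniformly random $m$-subset $\{r_1<\cdots<r_m\}$ of $\intr{n_y+m}$, and the map sending this subset to the counts $n_y - r_i + i$ (nonincreasing in $i$ precisely because $r_{i+1}\geq r_i+1$) is a bijection onto $A_{n_y,m}$, both sets having cardinality $\binom{n_y+m}{m}$. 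Your combinatorial bookkeeping checks out, including the reversal needed to order the p-values and the a.s.\ equivalence of $\bm{1}\{U_j\leq U_i^y\}$ with strict lexicographic comparison. What each approach buys: yours is fully self-contained and makes the subset--trajectory combinatorics explicit (in effect re-proving the cited Gazin et al.\ result, since a uniform histogram on $H_{n,m}$ and a uniform $m$-subset of ranks carry the same information); the paper's is shorter given the external theorem and reuses Corollary~\ref{cor:unif_pvalue_gen} elsewhere. The final reduction of the marginal claim to \Cref{lem:keylemma} with $n=n_y$ is identical in both, and you correctly keep the whole argument conditional on $Y_{n+1}=y$, as Assumption~\ref{ass:exchangeability} requires.
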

The proof is given in Appendix~\ref{sec:app_proofBetaBin}. As a minor remark, notice that when scores have no ties, the original
$\vp$ is preserved. More important, the advantage of the above result is that the knowledge of the exact distribution of the conformal p-values will allow us to estimate more precisely the quantiles of statistics constructed from this vector. As a consequence, the resulting prediction sets will be more informative (smaller sets) than those based on bounds for these quantiles.




\begin{remark}
The definition \eqref{eq:def_pvalue} of the conformal p-values differs a bit from the usual definition (different normalization and no additional $+1$, can be compared with \eqref{eq:def_pvalue_classic}) but provides an equivalent tool. With this definition, the ordered vector of p-values has a symmetric distribution around the diagonal: $1-(p_{(m)}(Y_{n+1}),\ldots,p_{(1)}(Y_{n+1}))$ has the same distribution as $\vprd(Y_{n+1})$. 
\end{remark}

\paragraph{Other important law derivations.}
Combining this first result with Lemma~\ref{lem:keylemma}, we can derive the distribution of critical quantities such as the empirical coverage or the marginal distribution of the ordered vector.

\begin{corollary}\label{prop:betabin_distrib}
Under Assumption~\ref{ass:exchangeability}, let $B_y = \sum_{j=1}^m \bm{1}\set{y \in \cC^{cd}_\alpha(X_{n+j})}$ be the number of class conditional sets containing the label $y$. Then for $y\in \intr{K}$ and $\ell \in \intr{m}$ :
\begin{equation}\label{eq:BBin}
    \prob{ B_{Y_{n+1}} = \ell | Y_{n+1} = y} \geq \prob{ \beta =\ell}\,,
\end{equation}
where $\beta$ follows the distribution $\BBin\paren{m,(n_y+1) - k_{y,\alpha},k_{y,\alpha} }$ with $k_{y,\alpha} = \lfloor (n_y+1)\alpha\rfloor$. Moreover, if the scores have no ties, \Cref{eq:BBin} is an equality.
\end{corollary}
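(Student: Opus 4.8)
The plan is to connect the quantity $B_y = \sum_{j=1}^m \bm{1}\{y \in \cC^{cd}_\alpha(X_{n+j})\}$ to the randomized p-values $\prd_j(y)$, and then invoke the second bullet of \Cref{lem:keylemma} together with \Cref{prop:pvalue_distrib}. First I would recall that the class-conditional set membership $y \in \cC^{cd}_\alpha(X_{n+j})$ is equivalent to the score condition $s(X_{n+j},y) \leq S^y_{(\lceil (1-\alpha)(n_y+1)\rceil)}$ from \eqref{eq:def_cond_set}. I would rewrite this condition in terms of the number of calibration scores exceeding the test score, i.e. in terms of $n_y \cdot \prd_j(y)$ (the count of calibration scores above $S_{n+j}(y)$, with ties broken by the uniforms $U$). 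The key reduction is to show that $y \in \cC^{cd}_\alpha(X_{n+j})$ is implied by $n_y \cdot \prd_j(y) \geq \ell$ for an appropriate integer threshold $\ell$, with equality of events when there are no ties.

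Next I would pin down the threshold. Since $\cC^{cd}_\alpha$ keeps the largest $n_y + 1 - k_{y,\alpha}$ order-statistic positions (with $k_{y,\alpha} = \lfloor (n_y+1)\alpha\rfloor$), membership corresponds to at least $k_{y,\alpha}$ calibration scores lying weakly above the test score. Thus I expect the event $\{y \in \cC^{cd}_\alpha(X_{n+j})\}$ to coincide (no ties) or be implied (with ties, because the randomized $\prd_j$ counts strict inequalities plus tie-broken ones, which is $\leq$ the non-strict count) by $\{n_y \cdot \prd_j(y) \geq k_{y,\alpha}\}$ with $\ell = k_{y,\alpha}$. This gives $B_y \geq \sum_{j=1}^m \bm{1}\{n_y \cdot \prd_j(y) \geq k_{y,\alpha}\}$ pointwise, hence a stochastic domination, and equality when scores have no ties.

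Then I would apply the distributional results. Conditionally on $Y_{n+1} = y$, \Cref{prop:pvalue_distrib} gives $\vprd(Y_{n+1}) \sim \cU(A_{n_y,m})$, so the vector $(n_y \cdot \prd_{(1)}(y), \ldots, n_y \cdot \prd_{(m)}(y))$ has exactly the law of $n_y \cdot P$ for $P \sim \cU(A_{n_y,m})$. The second bullet of \Cref{lem:keylemma} then states $\sum_{j=1}^m \bm{1}\{n_y \cdot P(j) \geq \ell\} \sim \BBin(m, n_y + 1 - \ell, \ell)$. Substituting $\ell = k_{y,\alpha}$ yields $\sum_{j=1}^m \bm{1}\{n_y \cdot \prd_j(y) \geq k_{y,\alpha}\} \sim \BBin(m, (n_y+1) - k_{y,\alpha}, k_{y,\alpha})$, which is exactly the law of $\beta$ in the statement. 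Combining with the pointwise bound $B_y \geq \sum_j \bm{1}\{n_y \cdot \prd_j(y) \geq k_{y,\alpha}\}$ gives the stochastic inequality $\prob{B_{Y_{n+1}} = \ell \mid Y_{n+1} = y} \geq \prob{\beta = \ell}$ — though I would be careful here, since a pointwise inequality between integer-valued variables gives first-order stochastic dominance, which yields an inequality on tail probabilities $\prob{B_y \geq \ell}$ rather than directly on the pointwise masses $\prob{B_y = \ell}$; I would likely restate or justify the mass inequality through the monotone coupling (the two variables agree except when a tie pushes $B_y$ strictly above the count, so $B_y = \beta + (\text{nonnegative tie correction})$).

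The main obstacle I expect is precisely this last point: handling ties cleanly. The randomization via the $U_i$'s ensures the \emph{randomized} p-values have the clean uniform-on-$A_{n_y,m}$ law, but the \emph{actual} set $\cC^{cd}_\alpha$ is defined through the raw (possibly tied) scores, so I must verify that the tie-breaking can only \emph{add} members to the conformal set relative to the randomized count — i.e. that the inequality in \eqref{eq:BBin} goes in the stated direction. Making the coupling between $B_y$ and the beta-binomial explicit (rather than arguing via abstract stochastic dominance) is the delicate step, and it is also what collapses the inequality to an equality in the no-ties regime, matching the final sentence of the corollary.
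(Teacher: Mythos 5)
Your proof is correct and follows essentially the same route as the paper's: where you derive the implication $\{n_y\cdot \prd_j(y)\geq k_{y,\alpha}\}\Rightarrow\{y\in\cC^{cd}_\alpha(X_{n+j})\}$ directly from the definition of $\prd_j$ (strict count plus tie-broken count is at most the weak count), the paper realizes the identical inequality via order-preserving proxy scores $\wt{S}=S+U\delta$, showing $\wt{\cC}^{cd}_\alpha(X_{n+j})\subset\cC^{cd}_\alpha(X_{n+j})$ with equality absent ties, and then both arguments conclude identically from \Cref{prop:pvalue_distrib} and the second bullet of \Cref{lem:keylemma} with $\ell=k_{y,\alpha}$. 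Your closing caution is well placed and should be kept: the almost-sure domination $B_y\geq \sum_j \mathds{1}\{n_y\cdot\prd_j(y)\geq k_{y,\alpha}\}$ yields only the tail inequality $\prob{B_{Y_{n+1}}\geq \ell \mid Y_{n+1}=y}\geq \prob{\beta\geq \ell}$ — which is all the paper's own proof establishes, and the form actually used downstream in Propositions~\ref{prop:CBB} and~\ref{prop:CBin} — whereas the pointwise mass inequality as literally stated cannot be salvaged by the monotone coupling (ties that systematically inflate $B_y$ can make $\prob{B_y=\ell}$ smaller than $\prob{\beta=\ell}$ for $\ell<m$), so the correct fix is to restate \eqref{eq:BBin} as stochastic dominance rather than to try to prove the per-$\ell$ masses.
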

%
The distribution of empirical coverage for conformal prediction sets has been first studied by \citet{vovk2012conditional} and \citet{angelopoulos2023conformal}. For the exchangeable case with no ties, we will refer to \citet{marquesuniversal} who obtains a similar result. {This result will be used to enhance the aggregation of conformal prediction sets by majority voting in our setting.}

\section{Refinement of majority voting for multiple inputs}\label{sec:majvote}
Thanks to these probabilistic results, we are in position to build new conformal predictors tailored for scenarios where multiple predictive observations of a single instance are available. As this problem can be seen as the aggregation of the prediction set associated to each instance, we focus in this section to the adaptation to our setting of the general majority voting methods proposed by \citet{gasparin2024merging}. They propose to keep the labels appearing in at least one half of the sets, in our setting it consists to consider the \textit{majority voting} set $\cC^{M}_{\alpha,m} = \set[1]{ y :\sum_{j=1}^m \bm{1}\set{y \in \cC^{cd}_\alpha(X_{n+j}} \geq m/2 }$. These authors also propose \textit{exchangeable majority} voting sets defined as $ \cC^{Me} = \cap_{k=1}^m \cC^{M}_{\alpha,k}$ usable in our setting as the sets $\cC^{cd}_\alpha(X_{n+j})$ are exchangeable. Both methods achieve a class-conditional coverage of $1-2\alpha$, replacing $\alpha$ by $\alpha/2$ leads to the correct coverage (see Appendix~\ref{app:majvote}).

If $\cC_j = \cC^{cd}_\alpha(X_{n+j})$, then thanks to Proposition~\ref{prop:betabin_distrib}, the distribution of the number of sets containing the true label $Y_{n+1}$ is known. 
The threshold $m/2$ chosen in the majority vote method can then be improved: we replace it by the quantile of a Beta-Binomial distribution.

\begin{proposition}\label{prop:CBB}
Let $\alpha \in (0,1)$, assume Assumption~\ref{ass:exchangeability}, let
\begin{equation}\label{eq:def_CBB}
    \cC^{BB}_\alpha := \set{ y : \sum_{i=1}^m \bm{1}\set{y \in \cC^{cd}_\alpha(X_{n+i})} \geq q^{BB}_\alpha (y)}\enspace,
\end{equation}
where $q^{BB}_\alpha (y)$ is the $\alpha$-quantile of the distribution $\BBin\paren{m, \lceil(n_y+1)(1-\alpha)\rceil,\lfloor (n_y+1)\alpha\rfloor }$. Then, for all $y\in \intr{K}$:
\begin{equation*}
    \prob{ Y_{n+1} \in \cC^{BB}_\alpha \, \vert \, Y_{n+1} = y } \geq 1 - \alpha\enspace.
\end{equation*}
\end{proposition}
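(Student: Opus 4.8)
The plan is to reduce the class-conditional coverage statement to a one-sided tail bound on the count $B_y := \sum_{i=1}^m \bm{1}\set{y \in \cC^{cd}_\alpha(X_{n+i})}$ of \Cref{prop:betabin_distrib}, and then read off that bound directly. First I would rewrite the target event: by the definition \eqref{eq:def_CBB}, $y \in \cC^{BB}_\alpha$ holds exactly when $B_y \geq q^{BB}_\alpha(y)$, so after conditioning on $Y_{n+1}=y$ (which turns $B_y$ into $B_{Y_{n+1}}$) it suffices to prove
\[
\prob{ B_{Y_{n+1}} \geq q^{BB}_\alpha(y) \mid Y_{n+1}=y } \geq 1-\alpha .
\]

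Next I would identify the beta-binomial law governing the threshold $q^{BB}_\alpha(y)$ with the one controlling $B_{Y_{n+1}}$. The key algebraic observation is that, $n_y+1$ being an integer, $\lceil (n_y+1)(1-\alpha)\rceil = (n_y+1) - \lfloor (n_y+1)\alpha\rfloor = (n_y+1) - k_{y,\alpha}$. Hence the distribution $\BBin(m,\lceil(n_y+1)(1-\alpha)\rceil,\lfloor(n_y+1)\alpha\rfloor)$ defining $q^{BB}_\alpha(y)$ is exactly the law of the variable $\beta \sim \BBin(m,(n_y+1)-k_{y,\alpha},k_{y,\alpha})$ appearing in \Cref{prop:betabin_distrib}.

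Then I would transfer the pointwise bound of \Cref{prop:betabin_distrib} to the upper tail. Assuming $q^{BB}_\alpha(y)\geq 1$ (the case $q^{BB}_\alpha(y)=0$ makes the event almost sure, so the claim is trivial), summing the inequality $\prob{B_{Y_{n+1}}=\ell \mid Y_{n+1}=y} \geq \prob{\beta=\ell}$ over $\ell \in \set{q^{BB}_\alpha(y),\ldots,m}\subseteq \intr{m}$ yields $\prob{B_{Y_{n+1}} \geq q^{BB}_\alpha(y)\mid Y_{n+1}=y} \geq \prob{\beta \geq q^{BB}_\alpha(y)}$. Finally, invoking the definition of the (lower) $\alpha$-quantile, namely $\prob{\beta < q^{BB}_\alpha(y)}\leq \alpha$, equivalently $\prob{\beta \geq q^{BB}_\alpha(y)}\geq 1-\alpha$, and chaining the two inequalities closes the proof.

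The main obstacle is bookkeeping rather than depth: one must (i) confirm the parameter identity so that the quantile is computed against the correct distribution, and (ii) fix the orientation of the quantile, since we need a \emph{lower} bound on an \emph{upper}-tail probability. The role of the inequality (rather than equality) in \Cref{prop:betabin_distrib} is precisely to absorb the effect of ties, which can only increase the $\cC^{cd}_\alpha$-membership counts and hence only help coverage; in the no-ties regime the bound is tight because \Cref{prop:betabin_distrib} holds with equality.
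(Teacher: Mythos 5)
Your proof is correct and takes essentially the same route as the paper, whose entire proof is the one-liner that the result is a direct consequence of Corollary~\ref{prop:betabin_distrib}; you simply make explicit the bookkeeping the paper leaves implicit, namely the integer identity $\lceil (n_y+1)(1-\alpha)\rceil = (n_y+1)-\lfloor (n_y+1)\alpha\rfloor$ matching the quantile's distribution to the corollary's $\beta$, the summation of the pointwise bound over $\ell \in \set{q^{BB}_\alpha(y),\ldots,m}$ (valid since you dispose of the trivial case $q^{BB}_\alpha(y)=0$ separately, keeping the sum inside $\intr{m}$ where the corollary applies), and the lower-quantile orientation $\prob{\beta < q^{BB}_\alpha(y)}\leq \alpha$. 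No gaps.
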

For each class, the set $\cC_\alpha^{BB}$ requires computing or storing quantiles of a Beta-Binomial distributions, which depends on the number of calibration points belonging to that class. To avoid performing this computation for each class individually, for example if the number of classes is too large, one can approximate the Beta-Binomial distribution by a Binomial distribution (see Lemma~\ref{lem:dtv_binbetabin}). This approximation induces a loss in coverage as presented in the following proposition.

\begin{proposition}\label{prop:CBin}
Let $\alpha \in (0,1)$, assume Assumption~\ref{ass:exchangeability} holds, let
	\begin{equation}\label{eq:def_CBin}
		\cC_\alpha^{Bin} := \set[2]{ y : \sum_{j=1}^{m}\bm{1}\set[2]{ y \in \cC^{cd}_\alpha(X_{n+j}) } \geq  q^{Bin}_\alpha}\enspace,
	\end{equation}
	where $q^{Bin}_\alpha$ is the $\alpha$-quantile of the binomial distribution $\cB(m,1-\alpha)$. Then for all $y \in \intr{K}$:
		\begin{equation}
	    \prob{ Y_{n+1} \in \cC_\alpha^{Bin} | Y_{n+1} =y, n_y} \geq 1 - \alpha - \frac{(m-1)\min(1,\alpha m)}{n_y+1}\enspace,\quad a.s.
	\end{equation}
	Moreover, if the labels $Y_1,\dots, Y_{n+1}$ are i.i.d., then
	\begin{equation}
		\prob{ Y_{n+1} \in \cC_\alpha^{Bin}} \geq 1 - \alpha - \frac{K(m-1)\min(1,\alpha m)}{n+1} \enspace.
	\end{equation}

\end{proposition}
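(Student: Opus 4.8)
The plan is to control the coverage of $\cC_\alpha^{Bin}$ through the survival function of the count $B_y := \sum_{j=1}^m \bm{1}\set{y \in \cC^{cd}_\alpha(X_{n+j})}$, exactly as $\cC_\alpha^{BB}$ was controlled in \Cref{prop:CBB}, the only change being that the data-dependent threshold $q^{BB}_\alpha(y)$ is replaced by the universal $q^{Bin}_\alpha$. Writing the conditional coverage as $\prob{Y_{n+1}\in\cC_\alpha^{Bin}\mid Y_{n+1}=y, n_y} = \prob{B_{Y_{n+1}} \geq q^{Bin}_\alpha \mid Y_{n+1}=y, n_y}$, I would first invoke \Cref{prop:betabin_distrib}: since $B_{Y_{n+1}}$ and $\beta\sim\BBin(m,a,b)$ both take values in $\intr{0:m}$, summing the pointwise bound $\prob{B_{Y_{n+1}}=\ell\mid Y_{n+1}=y}\geq\prob{\beta=\ell}$ over $\ell\in\set{q^{Bin}_\alpha,\ldots,m}$ (assuming $q^{Bin}_\alpha\geq 1$, the case $q^{Bin}_\alpha=0$ being trivial) shows that $B_{Y_{n+1}}$ stochastically dominates $\beta$, with $a=\lceil(n_y+1)(1-\alpha)\rceil$ and $b=\lfloor(n_y+1)\alpha\rfloor$. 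Hence $\prob{B_{Y_{n+1}}\geq q^{Bin}_\alpha\mid Y_{n+1}=y,n_y}\geq\prob{\beta\geq q^{Bin}_\alpha}$.

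Next I would pass from $\beta$ to $\cB(m,1-\alpha)$ in two controlled steps. Observe first that $a+b=n_y+1$ exactly, because $\lceil(n_y+1)(1-\alpha)\rceil=(n_y+1)-\lfloor(n_y+1)\alpha\rfloor$, and that $p':=a/(a+b)\geq 1-\alpha$ since the ceiling rounds up. As $p'\geq 1-\alpha$, the binomial $\cB(m,p')$ stochastically dominates $\cB(m,1-\alpha)$, so that $\prob{\cB(m,p')\geq q^{Bin}_\alpha}\geq\prob{\cB(m,1-\alpha)\geq q^{Bin}_\alpha}\geq 1-\alpha$, the last inequality being the defining property of the $\alpha$-quantile $q^{Bin}_\alpha$. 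It then remains to replace $\cB(m,p')$ by $\beta=\BBin(m,a,b)$, which I would do with \Cref{lem:dtv_binbetabin}:
\begin{equation*}
\prob{\beta\geq q^{Bin}_\alpha}\geq\prob{\cB(m,p')\geq q^{Bin}_\alpha}-d_{\mathrm{TV}}\paren{\beta,\cB(m,p')}\geq 1-\alpha-\frac{(m-1)\min(1,\alpha m)}{n_y+1}\enspace,
\end{equation*}
where the total-variation bound uses $1-p'=\lfloor(n_y+1)\alpha\rfloor/(n_y+1)\leq\alpha$ together with $a+b=n_y+1$. Chaining the three displays yields the first (almost sure, $n_y$-conditional) inequality.

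For the marginal statement under i.i.d. labels, I would integrate the conditional bound over the law of $n_y$. Since $Y_{n+1}$ is independent of the calibration labels, conditioning on $Y_{n+1}=y$ leaves $n_y\sim\cB(n,\pi_y)$ unchanged, where $\pi_y=\prob{Y_{n+1}=y}$; taking the expectation of the correction term then produces the factor $\mathbb{E}\brac{1/(n_y+1)}$. Using the elementary identity $\mathbb{E}\brac{1/(\cB(n,\pi)+1)}=\paren{1-(1-\pi)^{n+1}}/((n+1)\pi)\leq 1/((n+1)\pi)$, I obtain $\pi_y\,\mathbb{E}\brac{1/(n_y+1)}\leq 1/(n+1)$ for every class, and summing $\prob{Y_{n+1}\in\cC_\alpha^{Bin}}=\sum_{y\in\intr{K}}\pi_y\,\prob{Y_{n+1}\in\cC_\alpha^{Bin}\mid Y_{n+1}=y}$ over the $K$ classes yields the announced $K(m-1)\min(1,\alpha m)/(n+1)$ correction.

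The routine parts are the quantile manipulation and the two stochastic-dominance comparisons; the genuinely quantitative ingredient, and the main obstacle, is the total-variation estimate of \Cref{lem:dtv_binbetabin}, which controls the overdispersion of the beta-binomial relative to the binomial. Recovering exactly the stated constant requires keeping the sharp $1/(n_y+1)$ decay and using the favorable bound $1-p'\leq\alpha$ (rather than a cruder estimate) for the $\min(1,\alpha m)$ numerator, while the mean shift $p'\geq 1-\alpha$ must be absorbed by stochastic dominance rather than by the total-variation term.
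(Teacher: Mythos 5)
Your proof is correct and follows essentially the same route as the paper's: stochastic dominance of the count $B_y$ over the $\BBin(m,\lceil(n_y+1)(1-\alpha)\rceil,\lfloor(n_y+1)\alpha\rfloor)$ law via \Cref{prop:betabin_distrib}, the total-variation comparison of \Cref{lem:dtv_binbetabin} against the mean-matched binomial, and marginalization through $\e{1/(n_y+1)}$ using \Cref{lem:moment_invbin}. The only (equivalent) cosmetic difference is that you absorb the mean shift by stochastic dominance of $\cB(m,p')$ over $\cB(m,1-\alpha)$ at the fixed threshold $q^{Bin}_\alpha$, whereas the paper instead enlarges the threshold to $Q_\alpha(\cB(m,1-\alpha_y))$ before applying the same TV bound --- note that your $p'$ equals the paper's $1-\alpha_y$, so the two comparisons coincide.
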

The loss of coverage remains negligible when the number of repetitions is small ($m\ll n/K$ for the marginal one). In the setting of Pl@ntNet 
data, the number of repetitions is effectively low, but some classes have also a small numbers of examples in the calibration set which makes this approximation only usable for large classes or when only a marginal coverage is targeted.
\begin{remark}
The possibility to use a binomial distribution quantile has been already proposed by \citet{gasparin2024merging} to merge independent prediction sets. The assumption of independence is a bit restrictive and this result proves that it can effectively be used for prediction sets correlated by the calibration (so not independent) for small enough number of sets $m$.
\end{remark}

\section{p-value aggregation methods}\label{sec:pvalue_agg}

Voting methods presented above aggregate the conformal sets constructed for each observation point by only considering the number of sets that contain a given label. Although this procedure can be interpreted as an aggregation of the conformal p-values and as forming an envelope around them (see Figure~\ref{Fig:env}), we observe, however, that some information is lost in the process. In this section, we exploit the structure of the ordered vector of p-values to aggregate the observations. These methods decisively outperform the previously discussed voting strategies.

\subsection{General method}

The method consists in building a (non-)conformity score function $v:[0,1]^m \times \intr{K} \to \mbr$ directly on the p-values vector. Then, exploiting the result in Theorem~\ref{prop:pvalue_distrib}, we calibrate this new score by simulating samples $P^y_1,\ldots,P^y_T$ of the distribution $\cU(A_{n_y,m})$ for each class, using for example Algorithm~\ref{alg:simPnm}. Here, $T$ can be seen as a budget for approximating the score's distribution by Monte Carlo sampling. The method is summarized in the following result which gives a guarantee on the conditional coverage.

\begin{proposition}
Let $v : [0,1]^m\times \intr{K} \to \mbr$ and $T\in \mbn$, for $y\in \intr{K}$ let $P_t^y \overset{\textit{i.i.d.}}{\sim} \cU(A_{n_y,m})$ and $V_t^y:= v(P_t^y,y)$ for $t\in\intr{T}$. Then, for $\alpha \in (0,1)$, if Assumption~\ref{ass:exchangeability} is satisfied, the set
\begin{equation}\label{eq:def_Cenv}
    \cC_\alpha^{v} = \set{ y : v(\vprd(y),y) \geq V^y_{(\lfloor (T+1)\alpha \rfloor)} } 
\end{equation}
is conditionally valid: $\prob{ y \in \cC_\alpha^{v}  | Y_{n+1} =y } \geq 1 - \alpha$ for all $y\in\intr{K}$.
\end{proposition}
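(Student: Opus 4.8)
The plan is to reduce the coverage guarantee to a statement about the empirical quantile of the i.i.d.\ sample $V_1^y,\ldots,V_T^y$ together with the test statistic $V^y_{\text{test}} := v(\vprd(y),y)$. The crucial observation, provided by \Cref{prop:pvalue_distrib}, is that conditionally on $Y_{n+1}=y$, the ordered randomized p-value vector $\vprd(y)$ is distributed as $\cU(A_{n_y,m})$, which is \emph{exactly} the distribution from which the calibration samples $P_1^y,\ldots,P_T^y$ are drawn. Consequently, conditionally on $Y_{n+1}=y$, the $T+1$ random variables $V^y_{\text{test}}, V_1^y,\ldots,V_T^y$ are i.i.d.\ (all equal in distribution to $v(P,y)$ with $P\sim\cU(A_{n_y,m})$), hence exchangeable.

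First I would write the failure event explicitly: $y\notin\cC_\alpha^v$ means $V^y_{\text{test}} < V^y_{(\lfloor(T+1)\alpha\rfloor)}$, i.e.\ $V^y_{\text{test}}$ is strictly smaller than the $\lfloor(T+1)\alpha\rfloor$-th order statistic of the calibration scores. The standard rank argument for exchangeable variables then gives the bound. Concretely, among the $T+1$ exchangeable values, the probability that $V^y_{\text{test}}$ falls strictly below the $k$-th smallest of the \emph{other} $T$ values is controlled by the rank of $V^y_{\text{test}}$ within the full pooled sample; by exchangeability each of the $T+1$ positions is equally likely (in the no-tie case), so the probability of landing in any fixed number of lowest positions is at most the corresponding fraction. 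Setting $k=\lfloor(T+1)\alpha\rfloor$ yields
\begin{equation*}
    \prob{ V^y_{\text{test}} < V^y_{(k)} \mid Y_{n+1}=y } \leq \frac{k}{T+1} = \frac{\lfloor(T+1)\alpha\rfloor}{T+1} \leq \alpha\enspace,
\end{equation*}
which is exactly the claimed $\prob{y\in\cC_\alpha^v \mid Y_{n+1}=y}\geq 1-\alpha$.

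The main obstacle I anticipate is the careful treatment of \emph{ties}, which can arise both because the distribution $\cU(A_{n_y,m})$ is discrete (so $v$ may take equal values with positive probability) and because $v$ itself need not be injective. The clean rank bound above holds verbatim only when all $T+1$ values are almost surely distinct; with ties the event $\{V^y_{\text{test}} < V^y_{(k)}\}$ uses a strict inequality, so I would argue that ties can only \emph{help}, since a tie between the test statistic and a calibration value moves probability mass out of the strict-inequality failure region. Making this rigorous requires bounding the expected rank of $V^y_{\text{test}}$ under the exchangeable law and noting that the strict inequality in the definition of $\cC_\alpha^v$ means a value tied with the threshold is retained; summing over the at most $k$ positions strictly below the threshold then still gives the $k/(T+1)\leq\alpha$ bound. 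The remainder is routine: the floor in $\lfloor(T+1)\alpha\rfloor$ guarantees $k/(T+1)\leq\alpha$, and the result holds for every $y$ since the argument is entirely conditional on $Y_{n+1}=y$ and uses only the distributional identity \eqref{eq:pvalue_distrib}.
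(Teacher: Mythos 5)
Your proposal is correct and follows essentially the same route as the paper, which proves this result simply by noting it is a direct application of split conformal prediction with score function $v$, using Theorem~3 to identify the conditional law of $\vprd(y)$ with $\cU(A_{n_y,m})$, the distribution of the Monte Carlo samples, so that $v(\vprd(y),y), V_1^y,\ldots,V_T^y$ are exchangeable given $Y_{n+1}=y$ and the standard rank bound $\lfloor(T+1)\alpha\rfloor/(T+1)\leq\alpha$ applies. Your tie discussion is sound (and can be made fully rigorous by the deterministic counting argument that at most $k$ of the $T+1$ pooled values can have rank at most $k$), matching the conservative direction implicit in the paper's construction.
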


  \begin{wrapfigure}[8]{r}{0.45\textwidth}
  \vspace{-2em}
    \begin{minipage}{0.45\textwidth}
    \begin{algorithm}[H]
    \caption{Simulation of $P \sim \cU(A_{n,m})$}
    \label{alg:simPnm}
    \begin{algorithmic}[1]
    \STATE \textbf{Input:} $n,m$.
    \STATE Draw $R_1,\ldots,R_m$ w./o. replacement from $\intr{n+m}$.
    \STATE $P_i \gets R_{(i)}$ for $1\leq i \leq m$.
    \STATE $P_i \gets P_i -i$ for $1\leq i \leq m$.
    \STATE \textbf{Output:} $(P_1/n,\dots,P_m/n)$.
    \end{algorithmic}
    \end{algorithm}
    \end{minipage}
  \end{wrapfigure}
This result is a direct application of conformal prediction using the score function $v$, and follows as a consequence of Theorem~\ref{prop:pvalue_distrib}. While the choice of the score function $v$ is discussed below, it is worth noting that for a given $v$, we retain the low scores in order to remain within the standard framework of envelope methods: as shown in Table~\ref{tab:envelopes}, the scores associated with envelopes tend to reject p-values corresponding to low scores. 
\begin{remark}

Recall that the guarantees provided by conformal prediction in \eqref{eq:cov_splitCP} and \eqref{eq:covcd_splitCP} hold with high probability, including with respect to the calibration set. Formally, the sample $(P_t^y)_{t,y}$ should be redrawn at each execution. However, it is also possible to fix $T$ sufficiently large and store the value $V^y_{(\lfloor (T+1)\alpha \rfloor)}$ for each class, interpreting it as a quantile estimate of the score distribution.
\end{remark}

\subsection{Construction of score functions $v$}

We propose in the following different envelopes - associated to a score function $v: [0,1]^m \to \mbr$ on the p-values---that can either be used for the vector of p-values $\vp$ or its randomized version $\vprd$ as in the above result.

\paragraph{Quantile envelope.} The first envelope that we consider relies on quantiles of the beta-binomial distribution. For a clear presentation, let us first remark that as presented in Theorem~\ref{prop:pvalue_distrib}, the marginal distributions of the sorted vector of p-values $\vprd(Y_{n+1})$ are known. Indeed, conditionally to $Y_{n+1}=y$ and $n_y$, the random variable $n_y\cdot\prd_j(y)$ follows the distribution $\BBin(n_y,j,n_y-j+1)$.


To capture the geometry of the trajectory of p-values, we propose to choose an envelope of the trajectories constructed as the vector of quantiles of level $\lambda$ of each marginal (see Figure~\ref{Fig:env}). Consider $\{F_j\}_j$ a general family of cdf and let us remark that
\begin{equation*}
    \set[1]{ \vp : \forall j :  p_{(j)} \geq F_j^{(-1)}(\lambda)} = \set[1]{ \vp : \min_j F_j(p_{(j)}) \geq \lambda}\enspace.
\end{equation*}
The envelope of quantile of marginal associated to the \textit{quantile} score $v_Q$ is defined as 
\begin{equation}\label{eq:def_score}
    v_Q(\vp,y) := \min_{j\in \intr{m}} F_{j,n_y}(n_y \cdot p_{(j)}(y) )\,, \quad \text{for } \vp\in [0,1]^m, y \in \intr{K}\enspace,
\end{equation} 


\begin{wrapfigure}[18]{r}{0.45\textwidth}  
\vspace{-20pt}
    \centering
    \includegraphics[width=0.45\textwidth]{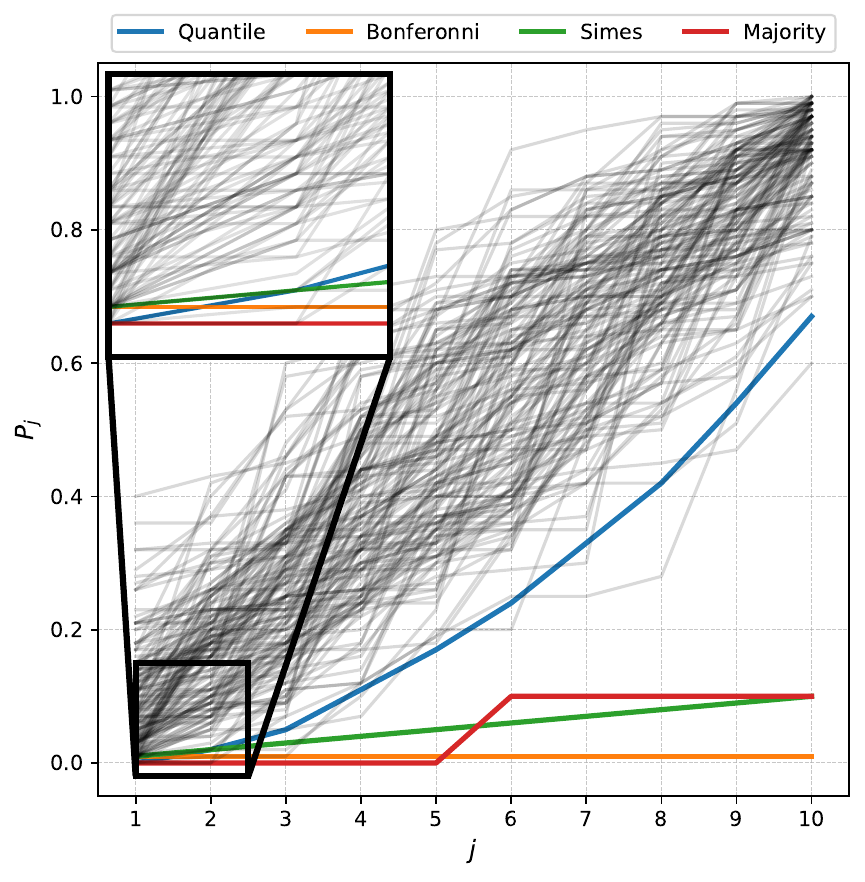}  
\vspace{-20pt}

    \caption{Simulation of 150 samples of $P\sim \cU(A_{n,m})$ for $m=10$ and $n=100$ (black lines). Quantile is the envelope associated to $v_Q$, and Majority to the majority vote.}\label{Fig:env}
\end{wrapfigure}
where $F_{j,n}$ is the cdf of the distribution $\BBin(n,j,m-j+1)$. This kind of envelope, constructed as the true quantile of the marginal with a specific level, appears for instance in \citet{genovese2006exceedance,Blanchard2020PostHC} and in \citet{fermanian2025transductive} in an empirical version.

\paragraph{$\ell^q$-Area envelope.} A simple idea to reject the classes whose p-values are too small could be supported by the fact that the area of the trajectory is too small. For $q>0$, we define the \textit{area} score: $v_{q-Area}(\vp,y) := \norm{\vp}_q$,
where $\norm{\cdot}_q$ for $q\geq 1$ is the $\ell_q$-norm in $\mbr^m$.
For $q=1$, the procedure that consists in rejecting a trajectory or not can be viewed as a variant of the Wilcoxon-Mann-Whitney test~\citep{wilcoxon1945individual,mann1947test}.

\paragraph{$\ell^q$-envelope.} As asymptotically, the vector $\vprd(Y_{n+1})$ converges to the identity vector $\texttt{Id} =\paren{{j}/{(m+1)}}_{j=1}^m $, our last score only considers the distance of p-value vector to this vector. In particular, $\e[1]{\prd_{(j)}(Y_{n+1})} = \frac{j}{m+1}$ (\lcf Theorem~\ref{prop:pvalue_distrib}) which motivates the score $v_{\ell^q}$ defined as:
$v_{\ell^q}(\vp,y) :=- \| \vp - \texttt{Id}\|_q$. The resulting set~\eqref{eq:def_Cenv} will then keep p-values close to $\texttt{Id}$.
For $q = \infty$, bounds of the exact quantile and the asymptotic distribution are provided respectively by \citet{Gazin2023TransductiveCI} and \citet{gazin2024asymptotics}.

\section{Experiments}\label{sec:experiments}
A key requirement of a conformal approach is its ability to accurately quantify uncertainty.
For an efficient classifier, this should lead to small prediction sets; for a less accurate one, the conformal set should still contain the correct label, at the cost of a larger set. To this purpose, we compare the methods, by comparing the size of the prediction sets returned for a fixed coverage.

Multiple observations of the same instance are particularly valuable in challenging tasks where a single observation does not permit informative classification. As we will demonstrate, our aggregation techniques lead to a reduction in prediction set size in such cases resulting in a gain of interpretability. Throughout our experiments, we report the accuracy of the base classifiers. The score used for all the following experiments is the \textit{softmax score} which is one minus the softmax output of the class. Additional results using the APS score \citep{romano2020classification} are presented in Section~\ref{app:add_expe}.

The tested prediction sets are the \textit{Majority} $\cC^M$---\Cref{eq:def_majvote}, \textit{Exchangeable Majority} $\cC^{Me}$---\cref{eq:def_majexchvote}, 
\textit{BetaBinomial} $\cC^{BB}$---\cref{eq:def_CBB}, \textit{Binomial} $\cC^{Bin}$---\cref{eq:def_CBin}, and the set $\cC^v$---\cref{eq:def_Cenv} with the score functions $v_Q$ (\textit{Quantile}), $v_{1-Area}$ (\textit{Wilcoxon}),  $v_{2-Area}$ (\textit{$\ell_2$ Area}) and $v_{\ell_2}$ (\textit{$\ell_2$}).

 Code for reproducing our experiments is available at \url{https://github.com/jeanbaptistefermanian/Class_Conditional_CP_for_Multi_Inputs}.

\subsection{Synthetic data}\label{sec:synthetic}
We choose as distribution of the points, a mixture of Gaussian distributions with randomly drawn centers. Formally, $Y \sim  \sum_{y=1}^K p(y) \delta_y$ and $X|(Y=y) \sim\cN(\mu_y,\sigma^2I_d)$,
where $K=10$, $d=6$, the classes are a bit imbalanced $p(y) \propto \paren{ 1+3(y-1)/K}^{-1}$ and $\sigma^2=3$. The centers of the classes $\mu_y$ are drawn independently as $\cN(0,I_d)$. 
The strength of the noise makes this problem difficult with only one observation as the distance between two centers clusters is relatively close to the distance of a point to its cluster's center : $\e{\norm{ X - \mu_Y}} \simeq \e{\norm{\mu_y-\mu_z}}$.
The model is a ReLu neural network with three hidden layers of width $15$. It is trained with $2000$ observations and achieves a top~1 accuracy of $0.49$. 
We apply our conformal procedure with a calibration set of size $n=1000$ and repeat $5000$ times our procedure for a number of observations $m \in \set{1,\ldots,100}$.

\begin{figure}
    \centering
    \includegraphics[width = \textwidth]{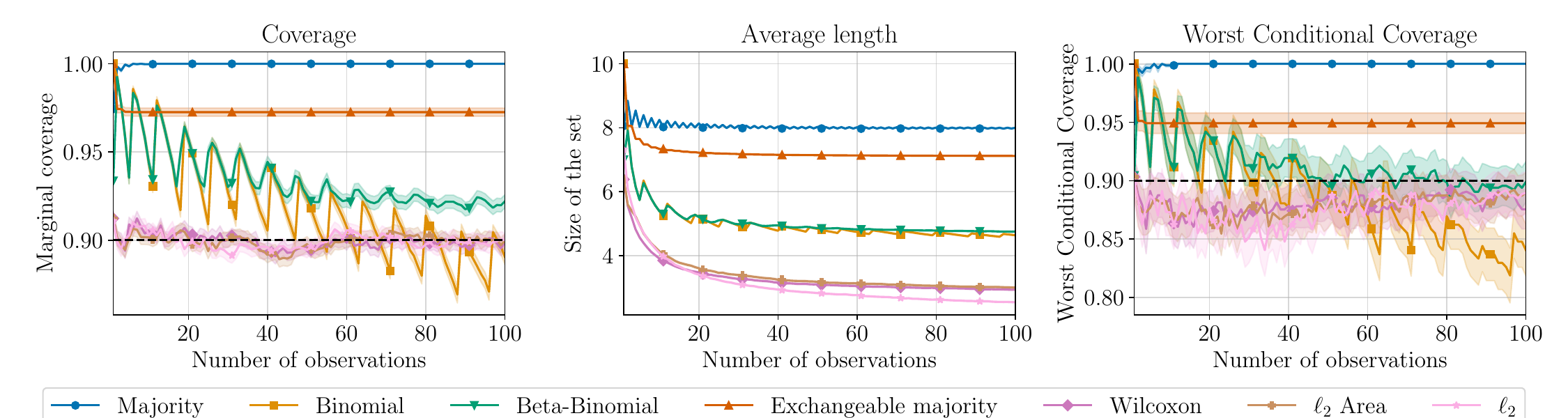}
    \vspace{-1.2em}
    \caption{Synthetic data: marginal coverage, average length and minimum of the class conditional coverages in function of the number of observations for $\alpha =0.1$, computed on $5000$ repetitions. 
    The confidence region is $\pm$ (empirical std)/$\sqrt{5000}$.}
    \label{fig:synthetic}
    \vspace{-1.2em}
\end{figure}
We observe in Figure~\ref{fig:synthetic} that the methods based on the p-values perform best as compared to majority vote methods. For instance, for $m=20$ and $\ell_2$-envelope method, we observe a significant improvement with a set of average size of $3.6$ as compared to an original value of $6.6$. They provide the smallest sets while maintaining a marginal coverage above $1-\alpha$, and exhibit conditional coverage close to $1-\alpha$. The majority vote methods appear to be overly conservative. The proposed refinement of the majority vote improves on the two original versions. As announced in Proposition~\ref{prop:CBin}, for small $m$, the binomial quantile approximates the beta-binomial one well. The marginal coverage is not significantly affected as long as $m/n$ remains small. However, in the third figure, the conditional coverage for some classes is impacted by this approximation when $m$ is large as the numbers of points of some classes in the calibration set are smaller than $m$. The over-coverage observed with the Beta-Binomial approach arises from the potential non-existence of a quantile at the exact level $1-\alpha$. This issue can be addressed by introducing additional randomization.

\subsection{The LifeCLEF Plant Identification Task 2015}\label{sec:truedata}

\begin{wrapfigure}[15]{r}{0.56\textwidth}  
    \centering
    \vspace{-10pt}
  \includegraphics[width = 0.55\textwidth]{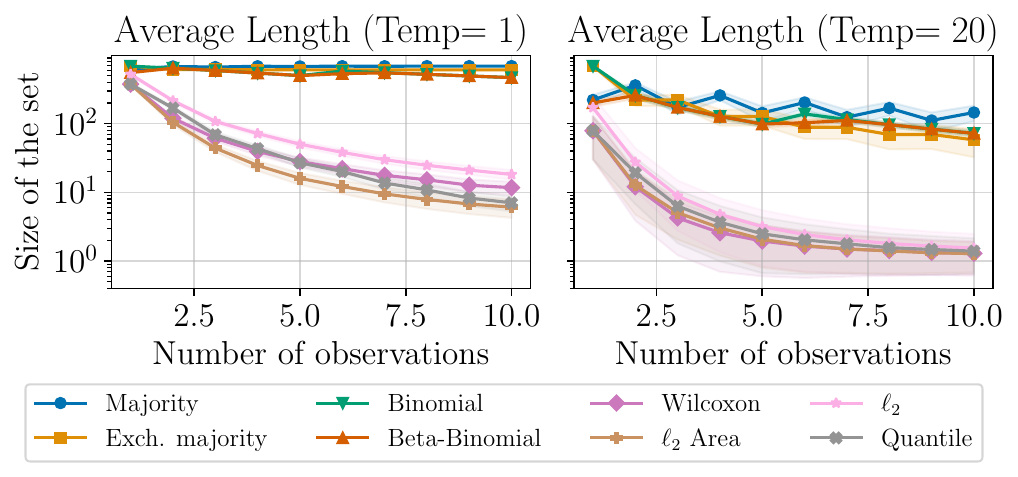}
    \caption{\texttt{LifeCLEF Plant Identification Task 2015}: average length in log-scale for $\alpha =0.1$ and two different choice of temperature in function of the number of observations.}
    \label{fig:plantnet}
\end{wrapfigure}
We apply our method to \texttt{LifeCLEF Plant Identification Task 2015}\footnote{\url{https://www.imageclef.org/lifeclef/2015/plant}}  \citep{goeau:hal-01182795} dataset, a dataset of 113,205 images of plants spread between 1K species (classes), with high heterogeneity.
We split equally between train, calibration and test sets the original full dataset thanks to a stratified approach to maintain the original classes distribution.
The classifier considered is ResNet50, trained on Pl@ntNet-300K \citet{Garcin_Joly_Bonnet_Affouard_Lombardo_Chouet_Servajean_Lorieul_Salmon2021}\footnote{see code and models at \url{https://github.com/plantnet/PlantNet-300K/}}, for which we have fine tuned the last layer on our training set.
It achieves a top-1 accuracy of $43\%$ (and top-5 of 63\%, see \cref{app:tech_details} for others top-k accuracy).
As a pre-processing step, we restrict ourselves to the classes (species) for which we have at least $20$ examples in the calibration set, hence keeping $K=688$ classes. This is needed for having informative class conditional CP set.
We group randomly images of the same classes to simulate multi-view observations of the same plant with a number of repetitions of $m\in \set{1,\ldots,10}$.

In Figure~\ref{fig:plantnet}, the prediction set lengths of various methods for two temperature settings (Temp=1 and Temp=20), all methods achieve a valid coverage of at least $0.9$, with detailed results deferred to \Cref{app:add_expe}. The temperature refers to the standard re-scaling of the output of the network before the application of the softmax function.
Our interpretation is that, due to the numerical approximation of the softmax function, a low temperature can lead to \textit{ties} between scores (many are then numerically $0$ or $1$), which may hinder the performance of conformal procedures (see, \eg \citealp{dabah2024temperature} for a discussion on the relationship between temperature scaling and conformal prediction).
Specifically, we observe that when using the raw network outputs without temperature adjustment (Temp=1), the majority vote struggles and tends to predict nearly all classes. In contrast, methods based on conformal p-values consistently produce narrower prediction sets after just a few observations and are less sensitive to this parameter.
Our refinements to the majority vote method do lead to improvements over the original approach, but they are still outperformed by the p-value aggregation methods.


\section{Conclusion and limitations}
We introduced a new class-conformal prediction framework for the problem of classification when multiple observations of the same instance at prediction time are available. Assuming the scores of the new points evaluated in the true label are exchangeable with those of the same class, we developed an analysis of the distribution of the vector of the underlying ordered conformal p-values. We built two novel aggregation strategies of these p-values and showed how effective is the method that captures the best the structure of the conformal p-values. 

Exchangeability of scores vector seems to be the bedrock of our strategy and more generally of conformal prediction (see~\Cref{sec:exchangebability}).
We ensured this condition by sampling the multi-inputs randomly from the same class.
It would be interesting to investigate the more realistic case of dependent multi-inputs, for which our method constitutes a foundational step.

\begin{ack}
The authors acknowledge Ulysse Gazin, Pierre Humbert and Etienne Roquain for helpful discussion. This work was funded by the French National Research Agency (ANR) through the grant Chaire IA CaMeLOt (ANR-20-CHIA-0001-01).
\end{ack}

\bibliography{biblio_predsetcomb}

\clearpage
\appendix
\section{Proofs}\label{sec:proof}
\subsection{Preliminary results}


This first result is derived from Theorem~A.1 of \citet{Gazin2023TransductiveCI}.
\begin{corollary}\label{cor:unif_pvalue_gen}
Let $S_1,\ldots, S_{n+m}$ exchangeable random variables with no ties ($S_i\neq S_j$ almost surely). 
For $j \in \intr{m}$, let
\begin{equation}\label{eq:def_pvalue_app}
    p_j = \frac{1}{n} \sum_{i=1}^n \bm{1} \set{ S_{n+j} \geq S_i},
\end{equation}
and $\sorted{\vp} = \paren{p_{(1)},\ldots, p_{(m)}}$ be the vector of ordered $p$-values. Then
\begin{equation*}
\sorted{\vp} \sim \cU\paren{A_{n,m}}\,,
\end{equation*}
where $A_{n,m}$ is defined in \eqref{eq:def_Anm}.
\end{corollary}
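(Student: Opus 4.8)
The plan is to reduce the statement to a purely combinatorial fact about the ranks of the test variables among the calibration variables, exploiting exchangeability together with the absence of ties. First I would observe that $n\cdot p_j = \sum_{i=1}^n \bm{1}\set{S_{n+j} \geq S_i}$ is simply the number of calibration scores lying strictly below $S_{n+j}$ (strictly, since ties have probability zero), and that this quantity is a nondecreasing function of $S_{n+j}$. Consequently, ordering the p-values is the same as ordering the test scores themselves: if $S_{n+(1)} < \cdots < S_{n+(m)}$ denote the ordered test variables, which are well defined almost surely because there are no ties, then $n\cdot p_{(j)}$ equals the number of calibration scores below $S_{n+(j)}$.

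Next I would encode these counts through the combined ordering of all $n+m$ variables. Since there are no ties, the global order is almost surely a strict total order; let $\pi_1 < \cdots < \pi_m$ be the positions in $\intr{n+m}$ occupied by the $m$ test variables. For the $j$-th smallest test variable, there are $\pi_j - 1$ variables below it, of which exactly $j-1$ are themselves test variables, so the number of calibration variables below it is $n\cdot p_{(j)} = \pi_j - j$. The map $(\pi_1,\ldots,\pi_m)\mapsto (\pi_1-1,\pi_2-2,\ldots,\pi_m-m)$ is the classical stars-and-bars bijection between strictly increasing sequences $1\leq \pi_1<\cdots<\pi_m\leq n+m$ and nondecreasing sequences $0\leq c_1\leq\cdots\leq c_m\leq n$, that is, exactly the elements of $n\cdot A_{n,m}$.

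To conclude, I would invoke exchangeability: because $S_1,\ldots,S_{n+m}$ are exchangeable and tie-free, their rank vector is a uniformly random permutation of $\intr{n+m}$, so the set of positions occupied by the test variables is uniform over the $\binom{n+m}{m}$ subsets of size $m$ of $\intr{n+m}$. Pushing this uniform law through the bijection above shows that $(n\cdot p_{(1)},\ldots,n\cdot p_{(m)})$ is uniform over the nondecreasing integer sequences in $\set{0,\ldots,n}^m$; since $\abs{A_{n,m}}=\binom{n+m}{m}$ as well, this is precisely $\sorted{\vp}\sim \cU(A_{n,m})$. Equivalently, the statement is the tie-free, class-agnostic specialization of Theorem~A.1 of \citet{Gazin2023TransductiveCI}, from which it can be read off directly.

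The main obstacle, such as it is, lies in the bookkeeping rather than in any deep argument: verifying that the monotonicity of $p_j$ in $S_{n+j}$ legitimately transfers the ordering from p-values to scores, checking the identity $n\cdot p_{(j)}=\pi_j-j$, and confirming that the stars-and-bars map is a bijection onto $n\cdot A_{n,m}$ with matching cardinality $\binom{n+m}{m}$. The exchangeability step itself is immediate once ties are excluded; indeed the no-ties hypothesis is exactly what simultaneously guarantees a well-defined global ranking and the strict monotonicity used in the first step.
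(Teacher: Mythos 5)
Your proof is correct, but it takes a genuinely different route from the paper. The paper's proof is a reduction to prior work: it invokes Theorem~A.1 of \citet{Gazin2023TransductiveCI}, which gives that the \emph{histogram} $\bm{M}$ of the p-values is uniform on the set $H_{n,m}$ of histograms, and then exhibits an explicit bijection $\phi : H_{n,m} \to A_{n,m}$ sending a histogram to its sorted trajectory, so that $\sorted{\vp} = \phi(\bm{M}) \sim \cU(A_{n,m})$. You instead re-derive the needed special case from scratch: exchangeability plus the no-ties hypothesis makes the rank vector of $(S_1,\ldots,S_{n+m})$ a uniform permutation, hence the set of global positions $\pi_1 < \cdots < \pi_m$ of the test scores is uniform over $m$-subsets of $\intr{n+m}$; the identity $n\, p_{(j)} = \pi_j - j$ (justified by the monotone transfer of sorting through the nondecreasing counting function, which is legitimate even though that function is not strictly increasing) together with the stars-and-bars bijection $(\pi_j)_j \mapsto (\pi_j - j)_j$ then pushes this uniform law onto $n \cdot A_{n,m}$. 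The two arguments encode the same combinatorial object in different coordinates—your occupied-positions subset is exactly the paper's histogram read off in sorted form—so neither is stronger, but each buys something: the paper's version is shorter and outsources correctness to a published theorem, while yours is self-contained, makes the cardinality $\abs{A_{n,m}} = \binom{n+m}{m}$ transparent, and, notably, your bijection $\pi_j \mapsto \pi_j - j$ is precisely the sampling scheme of Algorithm~\ref{alg:simPnm} (draw $R_1,\ldots,R_m$ without replacement, sort, subtract the index), so your argument simultaneously proves the correctness of that simulation procedure, which the paper leaves implicit. Your closing remark that the statement can alternatively be read off from Theorem~A.1 of \citet{Gazin2023TransductiveCI} is exactly the paper's route, so you have in effect supplied both proofs.
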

\begin{proof}
This result is a direct consequence of Theorem~A.1 of \citet{Gazin2023TransductiveCI}. According to this work, let $\bm{M} \in \intr{0,m}^n $ be the histogram of $\vp$, \emph{i.e.,} for $j\in \intr{0,n}$:
\begin{equation*}
    \bm{M}_j = \abs{\set{ i : p_i = \frac{j}{n} }}\,.
\end{equation*}
Then, $\bm{M} \sim \cU\paren{ H_{n,m}}$ where $H_{n,m}= \set{ h \in \intr{0,m}^n : \sum_{i=1}^n h_i =m  }$ is the set of histogram \citep[Theorem~A.1]{Gazin2023TransductiveCI}. We can now remark that $H_{n,m}$ is in bijection with the set of ordered trajectories $A_{n,m}= \set[2]{ a \in \set[2]{ \frac{i}{n}: 0\leq i \leq n} ^m : a_1\leq \ldots \leq a_m}$. For example $\phi : H_{n,m} \to A_{n,m}$ defined for $h \in H_{n,m}$ by:
\begin{equation*}
    \phi(h) = \frac{1}{n}\paren[3]{ \underbrace{0,\ldots,0}_{ h_0 \text{ times}}, \ldots, \underbrace{n,\ldots,n}_{ h_n \text{ times}}},
\end{equation*}
is such a bijection. Then $\sorted{\vp} = \phi\paren{\bm{M}}$ is uniformly drawn into $A_{n,m}$.
\end{proof}

\subsection{Proof of Lemma~\ref{lem:keylemma}}
\paragraph{Distribution of the marginal.} 
Let $P \sim \cU(A_{n,m})$, $k \in \intr{m}$ and $j\in \intr{0:n}$. The probability of the event $\set{  P(j) = k/n}$ is the probability that the sub-trajectories $P_{1:(j-1)} = (P(1),\ldots,P(j-1))$ belong to $\set{ i/n, i\in \intr{0:k}}^{j-1}$ and the sub-trajectories $P_{(j+1):m} = (P(j+1),\ldots,P(m))$ belong to $\set{ i/n, i\in \intr{k:n}}^{m-j}$. There are respectively $\binom{k+j-1}{j-1}$ and $\binom{n-k+m-j}{m-j}$ of such trajectories which are the numbers of non-decreasing sequences of $j-1$ (resp. $m-j$) elements of $\intr{0:k}$ (resp. $\intr{k:n}$).
Then,
\begin{align*}
    \prob{ P(j) = k/n } &= \frac{\#\set{P \in A_{n,m} : n P_{1:(j-1)} \in \intr{0:k}^{j-1}, nP(j) = k, nP_{(j+1):m} \in \intr{k:n}^{m-j}   }}{\#A_{n,m}} \\
    &= \frac{\#A_{k,j-1}\#A_{n-k,m-j}}{\#A_{n,m}} \\
    &= \frac{\binom{j+k-1}{k}\binom{n+m-j-k}{n-k}}{\binom{n+m}{n}} = \prob[2]{ \BBin(n,j,m-j+1) =k} \,.
\end{align*}
\paragraph{Distribution of the empirical cdf.} The event $\set{\sum_{j=1}^m \bm{1}_{ n\cdot P(j) \geq \ell} = k}$ is satisfied if the first $m-k$ values of $P$ are strictly below $\ell/n$ and the $k$ last ones are above $\ell/n$. That means that the sub-trajectories $P_{1:(m-k)} = (P(1),\ldots,P(m-k))$ belong to $\set{ i/n, i\in \intr{0:\ell-1}}^{m-k}$ and the sub-trajectories $P_{(m-k+1):m} = (P(m-k+1),\ldots,P(m))$ belong to $\set{ i/n, i\in \intr{\ell:n}}^{k}$. There are respectively $\binom{\ell-1+m-k}{m-k}$ and $\binom{n-\ell+k}{k}$ of such trajectories . Then:
\begin{align*}
    \prob{ \sum_{j=1}^m \bm{1}_{ n\cdot P(j) \geq \ell} = k } &= \frac{\#\set{P \in A_{n,m} : n P_{1:(m-k)} \in \intr{0:\ell-1}^{m-k},n P_{(m-k+1):m} \in \intr{\ell:n}^{k}   }}{\#A_{n,m}} \\
    &= \frac{\#A_{\ell-1,m-k}\#A_{n-\ell,k}}{\#A_{n,m}} \\
    &= \frac{\binom{\ell-1+m-k}{m-k}\binom{n-\ell+k}{k}}{\binom{n+m}{m}} = \prob[2]{ \BBin(m,n+1-\ell,\ell) =k} \,.
\end{align*}
\qed
\subsection{Proofs of \Cref{prop:pvalue_distrib} and  Corollary~\ref{prop:betabin_distrib}}\label{sec:app_proofBetaBin}

In this section we prove \Cref{prop:pvalue_distrib} and \Cref{prop:betabin_distrib} simultaneously. Let us first remark that if the scores have no ties, under the Assumption~\ref{ass:exchangeability}, all the results are obtained by applying Corollary~\ref{cor:unif_pvalue_gen} and Lemma~\ref{lem:keylemma} conditionally to $(Y_{n+1} =y)$ to the scores $(S_1^y,\ldots, S_{n_y}^y, S_{n+1}(y),\ldots, S_{n+m}(y) )$. 

Let us now consider the case where there are potentially ties between the scores. The proof uses some common idea with the proof of Proposition~1 of \citet{fermanian2025transductive}. 
Let $\delta$ be the minimum non null distance between the new scores of the true class and those evaluated in the calibration set:
\begin{equation*}
\delta := \min \set{  \abs{s-s'},   s\neq s' \in \set{ S_i}_{i\in \intr{n}} \cup \set{ S_{n+j}(Y_{n+1})}_{j \in \intr{m}}}\,.
\end{equation*}
If all the scores are equal, $\delta$ can be set to any non null value (for example $1$). We now define some proxy scores $\wt{S}$ by:
\begin{equation}\label{eq:proxy_scores}
    \wt{S}_i^y:= S_i^y + U_i^y\delta\,, \quad \text{for } y \in \intr{K}, i \in \intr{n_y}\,, \qquad 
    \wt{S}_{n+j} := S_{n+j} + U_j\delta \,, \quad \text{for } j \in \intr{m}\,,
\end{equation}
where $U_i^y$, $U_{j}$ for $y \in \intr{K}$, $i \in \intr{n_y}$ and $j \in \intr{m}$ are i.i.d. uniform random variables in $[0,1]$. The proxy scores $\wt{S}$ satisfy Assumption~\ref{ass:exchangeability} and have no ties. We can then apply Corollary~\ref{cor:unif_pvalue_gen} to the proxy scores $(\wt{S}_1^y,\ldots, \wt{S}_{n_y}^y, \wt{S}_{n+1}(y),\ldots, \wt{S}_{n+m}(y) )$ conditionally to $(Y_{n+1} =y)$ to get \eqref{eq:pvalue_distrib}. Indeed, the class conditional p-values $\wt{\vp}$ associated to the proxy scores $\wt{S}$ is equal to $\bm{\prd}$:
\begin{align*} 
\wt{p}_j(y) &:= \frac{1}{n_y} \sum_{i=1}^{n_y} \bm{1}\set{\wt{S}_{n+j}(y) \leq \wt{S}_i^{y}}  \\
&= \frac{1}{n_y} \sum_{i=1}^{n_y} \brac[2]{ \bm{1}\set{S_{n+j}(y) < S_i^{y}} + \bm{1}\set{ S_{n+j}(y) =S_i^y }\bm{1}\set{\delta U_j \leq \delta U_i^y }} \\
       &= \frac{1}{n_y} \sum_{i=1}^{n_y} \brac[2]{ \bm{1}\set{ S_{n+j}(y) <S_i^y } + \bm{1}\set{ S_{n+j}(y) =S_i^y }\bm{1}\set{  U_j \leq U_i^y } } = \prd_j(y)\enspace.
\end{align*}
Then the distribution of the marginal $\prd_{(j)}$ is a consequence of Lemma~\ref{lem:keylemma}.

To get Corollary~\ref{prop:betabin_distrib}, we can remark that the  proxy scores preserve the order as $  S_{n+j}(y) < S_i^y \Longrightarrow \wt{S}_{n+j}(y) < \wt{S}_i^y$. Thus for $j \in \intr{m}$ and $ y\in \intr{K}$:
\begin{align*}
  y \in \cC^{cd}_\alpha(X_{n+j}) & \iff  S_{n+j}(y) \leq S_{(\lfloor (n_y+1)(1-\alpha) \rfloor)}(y) \iff \sum_{i=1}^{n_y} \bm{1} \set{ S_{n+j}(y) \leq S_i^y} \geq \lfloor (n_y+1)\alpha \rfloor \\
  &  \iff \sum_{i=1}^{n_y} \bm{1} \set{ S_{n+j}(y) < S_i^y} + \sum_{i=1}^{n_y} \bm{1} \set{ S_{n+j}(y) = S_i^y} \geq \lfloor (n_y+1)\alpha \rfloor\\ 
  & \impliedby \sum_{i=1}^{n_y} \bm{1} \set{ \wt{S}_{n+j}(y) < \wt{S}_i^y} \geq \lfloor (n_y+1)\alpha \rfloor \\
  & \iff \sum_{i=1}^{n_y} \bm{1} \set{ \wt{S}_{n+j}(y) \leq \wt{S}_i^y} \geq \lfloor (n_y+1)\alpha \rfloor \\
  & \iff y \in \wt{\cC}^{cd}_\alpha(X_{n+j})\enspace ,
\end{align*}
where $\wt{\cC}^{cd}_\alpha$ is the class conditional prediction set \eqref{eq:def_cond_set} constructed with the proxy scores $\wt{S}$. We have therefore shown that $\wt{\cC}^{cd}_\alpha(X_{n+j}) \subset \cC^{cd}_\alpha(X_{n+j}) $, using that the randomization preserves the order and that the scores "tilde" have no ties. This implies that almost surely, the empirical coverage of the true scores is lower bounded by the empirical coverage of conformal sets constructed with the proxy scores, \ie :
\begin{equation*}
    \sum_{j=1}^m \bm{1} \set{ Y_{n+1} \in \cC^{cd}_\alpha(X_{n+j})} \geq \sum_{j=1}^m \bm{1} \set{ Y_{n+1} \in \wt{\cC}^{cd}_\alpha(X_{n+j})}\enspace .
\end{equation*}
We conclude again using Lemma~\ref{lem:keylemma}.



\subsection{Proof of Proposition~\ref{prop:CBB}}
The result is a direct consequence of Corollary~\ref{prop:betabin_distrib}.\qed

\subsection{Proof of Proposition~\ref{prop:CBin}}
Let us denote $B_y =  \sum_{j=1}^{m}\bm{1}\set{ y \in \cC^{cd}_\alpha(X_{n+j} )}$. First observe that:
    \begin{align*}
		\set{Y_{n+1} \notin \cC_\alpha^{Bin} } &= \set{ B_{Y_{n+1}}  <Q_\alpha(\cB(m,1-\alpha))}
		\subset \set{B_{Y_{n+1}}  <Q_\alpha(\cB(m,1-\alpha_{Y_{n+1}}))},
	\end{align*}
	where $\alpha_{Y_{n+1}} = \frac{\lfloor (n_{Y_{n+1}}+1)\alpha \rfloor}{n_{Y_{n+1}}+1} <\alpha$ and $Q_\alpha(\mbq)$ denotes the quantile $\alpha$ of the distribution $\mbq$. Let $Z_y \sim \cB(m,1-\alpha_{y})$ and $\beta_y \sim \BBin(m,n_y+1- k_y, k_y)
	$ where $k_y = \lfloor \alpha(n_y+1)\rfloor$. Then, for $y \in \intr{K}$:
	\begin{align*}
		\prob{Y_{n+1} \notin \cC_\alpha^{Bin} | Y_{n+1} =y } &\leq \prob{ B_y < Q_\alpha(\cB(m,1-\alpha_y)) | Y_{n+1} =y } \\
		&\leq \prob{ \beta_y < Q_\alpha(\cB(m,1-\alpha_y)) },
	\end{align*}
	using Corollary~\ref{prop:betabin_distrib}. It follows that:
	\begin{align}
	\prob{Y_{n+1} \notin \cC_\alpha^{Bin} | Y_{n+1} =y } 
	\leq & \prob{ Z_y < Q_\alpha(\cB(m,1-\alpha_y))} \notag
	\\
	&+d_{TV}\paren[2]{  \cB(m,1-\alpha_{y}), \BBin(m, n_y+1 - k_y, k_y }  \notag\\
			\leq& \alpha +  \frac{(m-1)\min(1,\alpha m )}{n_y+1}\,, \label{al:cond_cov_Bin}
	\end{align}
	where we have used Corollary~\ref{cor:dtv_binbetabin} in the last inequality to bound the total variation distance between the Binomial and the Beta-Binomial distributions. We have then obtained the conditional coverage guarantee. 
	
	Assume now that the labels $Y_1,\ldots,Y_{n+1}$ are i.i.d.. Then $n_y = \sum_{i=1}^n \bm{1}\set{ Y_i =y}$ is independent of $Y_{n+1}$ and follows a Binomial distribution $\cB(n,q(y))$ for $q(y) := \prob{ Y_1=y}$. Then:
	\begin{equation*}
	    \e{ \frac{1}{n_{Y_{n+1}+1}}} = \e{ \sum_{y=1}^K \frac{q(y)}{n_y+1} } \leq  \e{ \sum_{y=1}^K \frac{q(y)}{q(y)(n+1)} } = \frac{K}{n+1}\,,
	\end{equation*}
	where we have used Lemma~\ref{lem:moment_invbin} for the inequality. That gives the marginal coverage after taking the expectation relatively to $Y_{n+1}$ into \eqref{al:cond_cov_Bin}.\qed
	

\section{Failure of the naive approaches}\label{app:fail_naive}

We discuss in this section the challenges that naive approaches may face when dealing with the case of aggregated predictions.
\subsection{Synthetic dataset}

We first consider two naive methods that we view as natural. We illustrate them in the case of mean aggregation and evaluate their performance on the synthetic dataset presented in Section~\ref{sec:synthetic}.

\begin{figure}
    \centering
    \includegraphics[width = 0.5\textwidth]{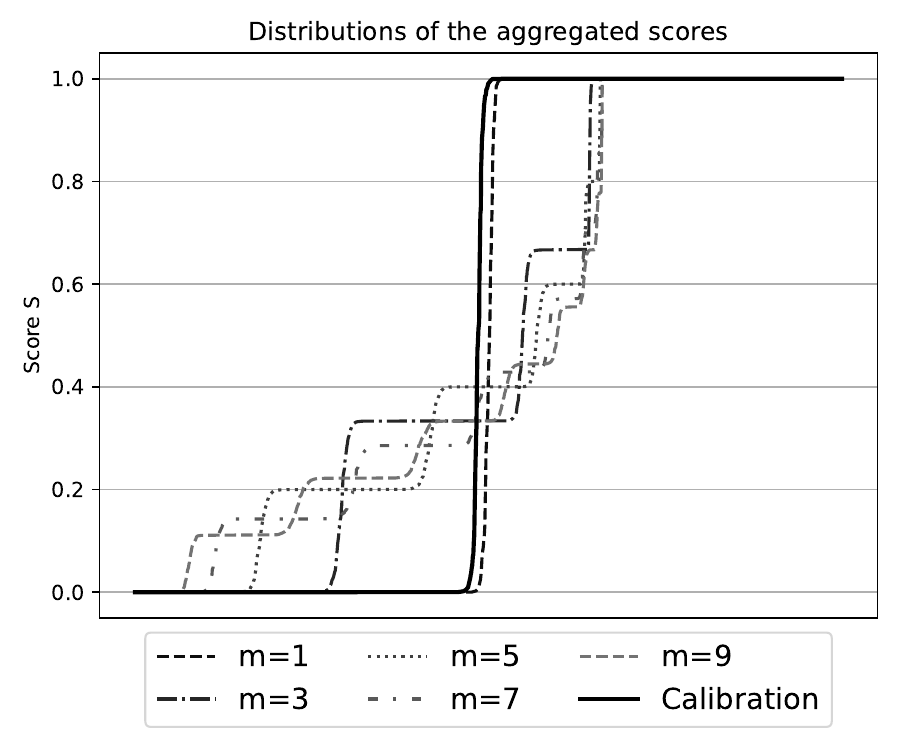}
    \caption{Distribution of the averaged softmax scores for synthetic data. $5000$ averaged scores are computed and sorted for $m\in\set{1,3,5,7,9}$. }
    \label{fig:distrib_score}
\end{figure}


\textbf{First method: directly using the calibration set.}
A first naive idea is to construct the prediction set using the original scores and to use it directly with the aggregating scores, \ie when the aggregation is the mean, we are considering the set $\set[2]{ y : S^{agg}_m(y) := m^{-1} \sum_{j=1}^m S_{n+j}(y) \leq S^y_{(\lceil (n_y+1)(1-\alpha) \rceil)}}$. Formally, this set lacks any guarantee since the new (aggregated) score $S^{agg}_m(Y_{n+1})$ follows a different distribution than the calibration scores. In Figure~\ref{fig:distrib_score}, we compare the distribution of this averaged scores for $m=1$ to $9$ observations to the distribution of calibration scores. A clear change in distribution can be observed even for small value of $m$. 
In practice, this method produces conservative prediction sets, whose size increases with the number of repetitions, as it can be seen in Figure~\ref{fig:cov_lgth_naive}.

\begin{figure}
    \centering
    \includegraphics[width = 1\textwidth]{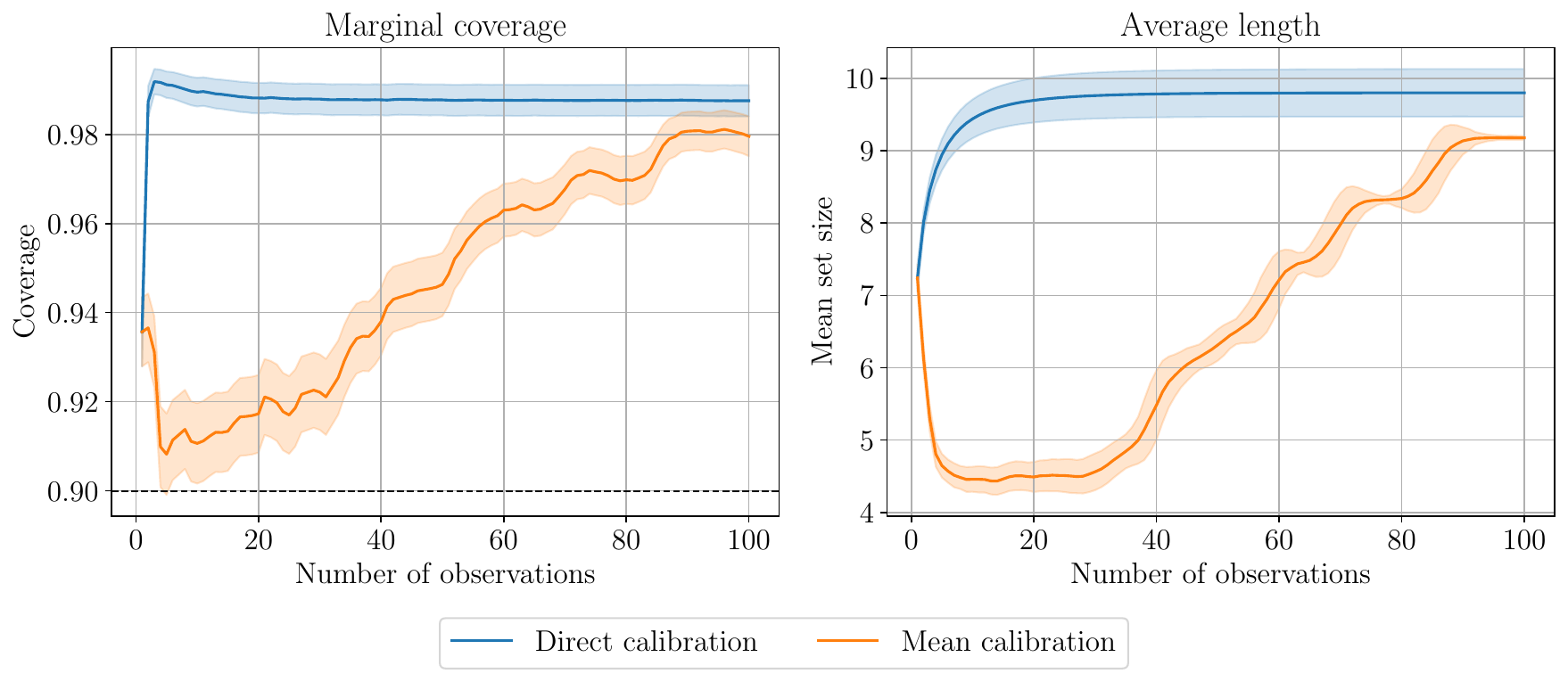}
    \vspace{-1.2em}
    \caption{Empirical coverage and average lengths for class-conditional approach using two naive calibration sets for synthetic data. The calibration set is of size $5000$, the coverage and the length are evaluated on $300$ multi-inputs repeated $1000$ times. The confidence region is $\pm$ (empirical std)/$\sqrt{5000}$}
    \label{fig:cov_lgth_naive}
\end{figure}

\textbf{Second method: aggregating the calibration scores.}
A second idea to obtain a calibration set exchangeable with the new score is, for a given number of observations $m$, to aggregate the calibration scores by class, \ie by averaging them in groups of $m$ points, in order to mimic the distribution of aggregated scores. These new aggregated calibration scores will be exchangeable under Assumption~\ref{ass:exchangeability} with the new aggregated score. However, by doing so, the calibration set is reduced to approximately $n/m$ points (more precisely, $\sum_{y=1}^K \lfloor n_y/m \rfloor$). Recall that the number of points per class must be at least $\alpha^{-1}$ in order to obtain informative class-conditionally valid prediction sets, which no longer holds when $m$ is large. As shown in Figure~\ref{fig:cov_lgth_naive}, this method yields valid prediction sets whose size decreases for small values of $m$, but increases again when the number of observations becomes too large. {This strategy may be sufficient in cases involving a limited number of inputs, or for classes with a sufficient number of points, \textit{i.e.}, when $m\ll n_y$. Note however that the methods based on p-values aggregation outperform this strategy, with an average size quickly below $4$ elements (see Figure~\ref{fig:synthetic}).}

\begin{remark}
Figure~\ref{fig:distrib_score} also highlights the occurrence of tied scores produced by the softmax output of a neural network. When examining the calibration curve, constructed from one minus the softmax outputs corresponding to the true classes, we observe that many of these values are either close or equal to $0$ or $1$ (plateaus at the left or the right). It indicates that the classifier often returns a Dirac, sometimes in the true class (scores of $0$) but also often in a wrong class (scores of $1$). This leads to score ties, therefore caused by a combination of classifier overconfidence and numerical approximation.
\end{remark}

\subsection{LifeClef dataset}

Our approach assumes that, at the time of calibration, only individual data points are observed, rather than multi-inputs. If multi-inputs are available at calibration time, an alternative to our methods would be to apply a class-conditional conformal prediction approach to the multi-inputs. In this setting, each data point corresponds to a multi-input, and each calibration score is computed as the aggregated score of that multi-input. However, as with the "second method" presented above for the synthetic dataset, such an approach requires a sufficient number of multi-inputs per class, specifically at least $\alpha^{-1}$. In the case of the LifeCLEF 2015 dataset, these conditions are not satisfied, and consequently, this method yields prediction sets that include nearly all classes. In comparison, the p-value aggregation methods does not suffer this issue as it is constructed from the calibration scores taken individually. To overcome this issue, it could be worthwhile to explore class-conditional conformal prediction methods specifically adapted to long-tailed distributions, such as \citet{ding2023class,ding2025conformal} but this remains out of the scope of this work. Nevertheless, we emphasize that if a sufficient number of multi-inputs is available, we can expect this method to behave better.

This is illustrated in Table~\ref{tab:lifeclef_cd_multi}, which reports the results of this naive method applied to the LifeCLEF dataset constructed as described in Section~\ref{sec:truedata} in average over all sizes and for different sizes of multi-inputs. The aggregation strategy remains a simple average of the softmax outputs. However, as previously discussed, the large prediction set sizes are due primarily to the limited amount of available data, rather than to the quality of the aggregation itself.

\begin{table}[ht]
    \centering
    \begin{tabular}{lccccc|c}
        \toprule
        $m$ & 1 & 3 & 5 & 7 & 9 & In average\\
        \midrule
        Coverage (\%) & 
        $88.2 \pm 0.5$ & 
        $96.3 \pm 0.6$ & 
        $97.5 \pm 0.7$ & 
        $95.5 \pm 1.5$ & 
        $99.0 \pm 1.0$ &
        $91.9 \pm 0.3$\\
        Length & 
        $623 \pm 1$ & 
        $665 \pm 1$ & 
        $673 \pm 1$ & 
        $675 \pm 1$ & 
        $677 \pm 1$ &
        $646 \pm 1$\\
        \bottomrule
    \end{tabular}
    \caption{Evaluation of class-conditional conformal prediction using multi-inputs structure conditional of different size $m$ of multi inputs and marginally over the different sizes. Recall that $K = 688$.}
    \label{tab:lifeclef_cd_multi}
\end{table}

\section{Majority voting.}\label{app:majvote}
We present in this section the majority voting strategies proposed by \citet{gasparin2024merging} for aggregating conformal sets and the guarantee we get in our setting. The purpose of these methods is to aggregate marginally valid prediction sets $\cC_1, \ldots, \cC_m$ coming from some blackbox method. We have compared to two of their methods that we consider of interest for our application. The \textit{majority} vote set $\cC^{M}$ contains the classes selected by at least one half of the sets. Formally:
\begin{equation}\label{eq:def_majvote}
  \cC^{M}= \set[3]{ y : \frac{1}{m}\sum_{j=1}^m \bm{1}\set{y \in \cC_j} \geq 1/2 }\,.
\end{equation}
If the sets $\cC_j$ have a coverage of $1-\alpha$, this set has a marginal coverage of at least $1-2\alpha$. To be able to compare this method to our sets of class-conditional coverage $1-\alpha$, we apply it to the sets $\cC_j = \cC^{cd}_{\alpha/2}(X_{n+j})$ of class-conditional coverage $1-\alpha/2$. Then, the resulting majority vote set has a class conditional coverage of at least $1-\alpha$, \ie for all $y\in \intr{K}$:
\begin{equation*}
    \prob{ y \in \cC^M | Y_{n+1}=y} = \prob[3]{\frac{1}{m} \sum_{i=1}^m\bm{1}_{y \in  \cC^{cd}_{\alpha/2}(X_{n+i})}  \geq \frac{1}{2}\; \Big| Y_{n+1}=y} \geq 1 - \alpha.
\end{equation*}
This method can be improved when the sets are exchangeable \citep{gasparin2024merging}: in that situation the intersection $\cC^{Me}$ of the majority vote sets with increasingly more "voters" defined by:
\begin{equation}\label{eq:def_majexchvote}
    \cC^{Me} = \set[3]{ y : \forall k\in \intr{m} : \frac{1}{k}\sum_{j=1}^k \bm{1}\set{y \in \cC^{cd}_{\alpha/2}(X_{n+i})} \geq 1/2 }\,
\end{equation}
ensures a class conditional coverage of at least $1 - \alpha$. In fact, Assumption~\ref{ass:exchangeability} guarantees that the sets $\mathcal{C}^{cd}_{\alpha/2}(X_{n+i})$ for $i \in \intr{m}$ are exchangeable given $Y_{n+1}$.

\section{Further experiments and technical details for Section~\ref{sec:truedata}}\label{app:plantnet_details}

\subsection{Technical details}\label{app:tech_details}

\begin{figure}
    \centering
    \includegraphics[width = 0.5\textwidth]{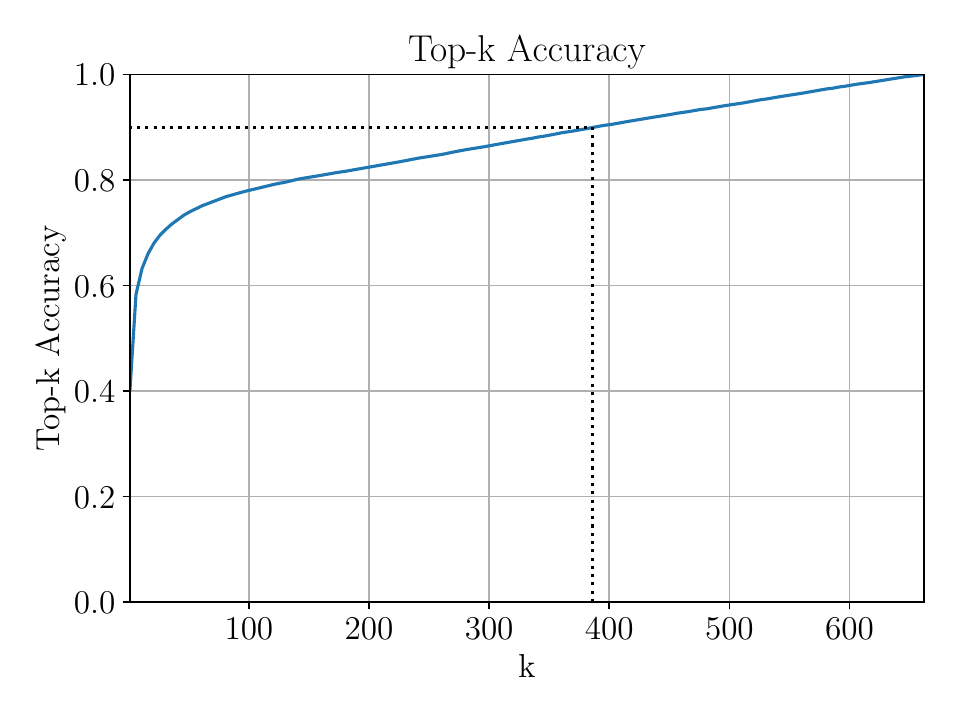}
    \vspace{-1.2em}
    \caption{Top-k accuracy of the model for the LifeClef dataset. The dot line indicates the top-k accuracy of $0.9$: at least $k=380$ classes are needed to reach this level.}
    \label{fig:accuracy_plantnet}
\end{figure}

The classifier considered is ResNet50, pre-trained on Pl@ntNet-300K \citet{Garcin_Joly_Bonnet_Affouard_Lombardo_Chouet_Servajean_Lorieul_Salmon2021} (see code and models at \url{https://github.com/plantnet/PlantNet-300K/}), for which we have fine tuned the last layer on our training set. The system we used for this training phase is equipped with a 4x Intel Xeon Gold 6142 (64 cores/128 threads total @ 2.6-3.7 GHz, 88MB L3 cache) while the GPUs are 2x NVIDIA A10 (24GB VRAM each) and 2x NVIDIA RTX 2080 Ti (11GB VRAM each), for a total of 70GB GPU VRAM. Obtaining the softmax scores required about 10 hours on our GPU resources to fine tune this pre-trained model. The conformal method runs on a standard machine. The codes of the experiments are provided in the supplementary materials.

Figure~\ref{fig:accuracy_plantnet} presents the top-k accuracy of the model for different values of $k$. To achieve an accuracy of $0.9$, one needs to consider at least approximately $k=380$ classes. This information can be compared with the average sizes of the conformal prediction sets shown in Figure~\ref{fig:plantnet}. These sets tend to be significantly smaller for the methods we introduce, when a few additional observations are available. Even for $m=1$ with a large temperature parameter (Temp=20), p-value-based methods produce smaller prediction sets, whose size is less than 100. This is likely due to the randomization involved in the p-value computation.

\subsection{Additional experiments}\label{app:add_expe}
In Figure~\ref{fig:coverage_plantnet}, we report the marginal coverage of the prediction sets, as well as the average class-conditional coverage. Due to the small number of observations available for some classes (\eg only 14 test points in the smallest class, which yields at most one multi-input sample of size $m=10$ for this class), evaluating the conditional coverage per class becomes infeasible. In this case, the average class-conditional coverage serves as a proxy for the true conditional coverage. It is also worth noting that it closely aligns with the marginal coverage, which is expected given the limited number of test points.

\begin{figure}[ht]
    \centering
    \includegraphics[width =0.9\textwidth]{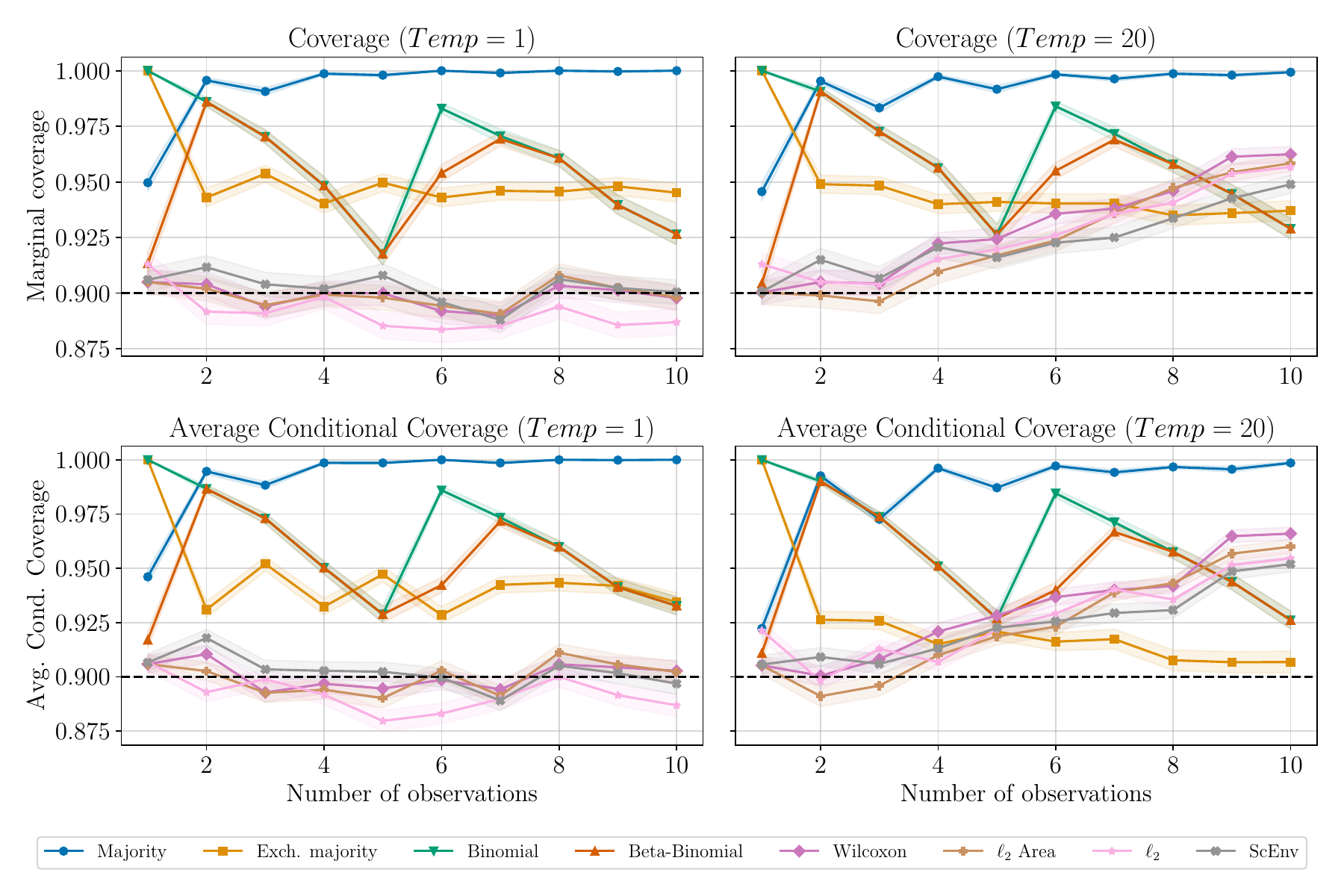}
    \vspace{-1.2em}
    \caption{Coverage of the methods for the LifeClef dataset and different values of temperature.}
    \label{fig:coverage_plantnet}
\end{figure}

In Figure~\ref{fig:APS_plantnet}, we present the results (coverage and size) of our methods applied using the APS score \citep{romano2020classification} instead of the softmax score used in the rest of our experiments. Let $\widehat{\pi} : \cX \to [0,1]^{|\cY|}$ be the classifer (softmax output of the neural network), it is defined as:
\begin{equation}\label{eq:def_APS}
    s_{\text{APS}}(x,y) = \sum_{y' \in \cY} \widehat{\pi}(x)_{y'} \bm{1}_{ \widehat{\pi}(x)_{y'}>  \widehat{\pi}(x)_{y}} + U  \widehat{\pi}(x)_{y}\,,
\end{equation}
where $U \sim \cU(0,1)$ is a uniform random variable. The results are similar to those obtained using the softmax score; the p-value aggregation methods offer again a substantial improvement in terms of size compared to the majority-based methods.

\begin{figure}
    \centering
    \includegraphics[width =0.9\textwidth]{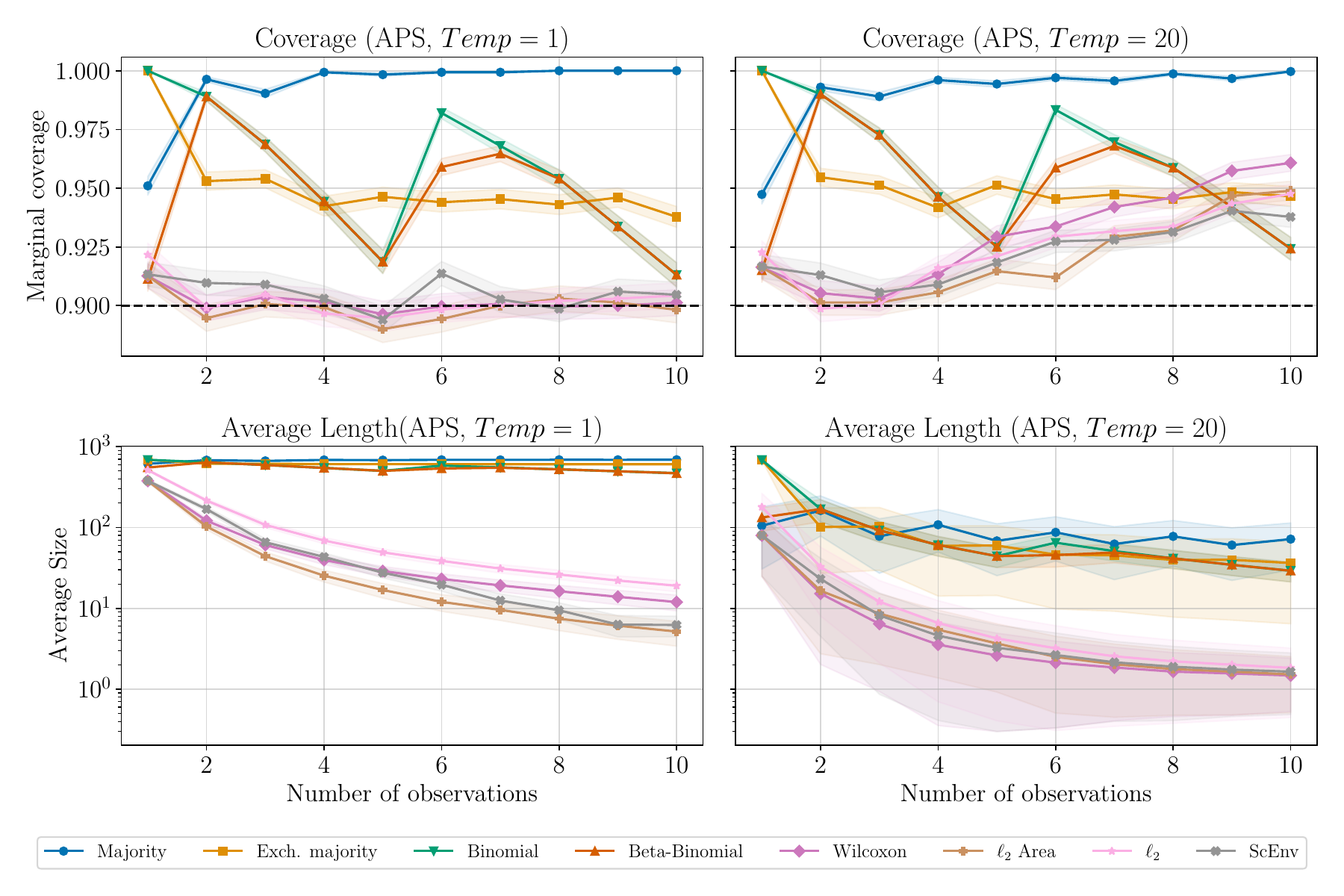}
    \vspace{-1.2em}
    \caption{Coverage and size (in log scale) of the methods for the LifeClef dataset with APS score and different values of temperature.}
    \label{fig:APS_plantnet}
\end{figure}

\section{Limitations of the assumption of exchangeability.}\label{sec:exchangebability}

The methods proposed in this work rely on the assumption that the multi-input samples are exchangeable with data from the same class. In the context of Pl@ntNet, this means that multiple images of the same plant behave similarly to images of different plants from the same species. In the experiments presented above, this assumption is satisfied by construction of the dataset, as the multi-inputs are sampled within the same class.

The \texttt{LifeCLEF Plant Identification Task 2015} dataset also features a multi-input structure, derived from images jointly submitted by users. However, we observed that the exchangeability assumption does not hold for the multi-inputs in this dataset. The correlation between images within the same input is strong enough to break Assumption~\ref{ass:exchangeability}, resulting in prediction sets that are no longer valid.

Our p-value-based methods implicitly rely on the idea that each new observation brings additional information, thereby enabling more refined class selection. When images within the same input are too similar, the resulting p-values tend to be overly close, which can lead to the rejection of the correct class.

In contrast, majority vote-based methods are more robust to this effect and typically yield prediction sets with appropriate coverage, although often at the cost of larger size. The Binomial and Beta-Binomial methods still improve upon majority vote strategies (with a reduction in size of around $100$ for $m=10$), but, like the p-value aggregation approach, they no longer offer theoretical guarantees when the exchangeability assumption is violated.

\begin{figure}[H]
    \centering
    \includegraphics[width = 1\textwidth]{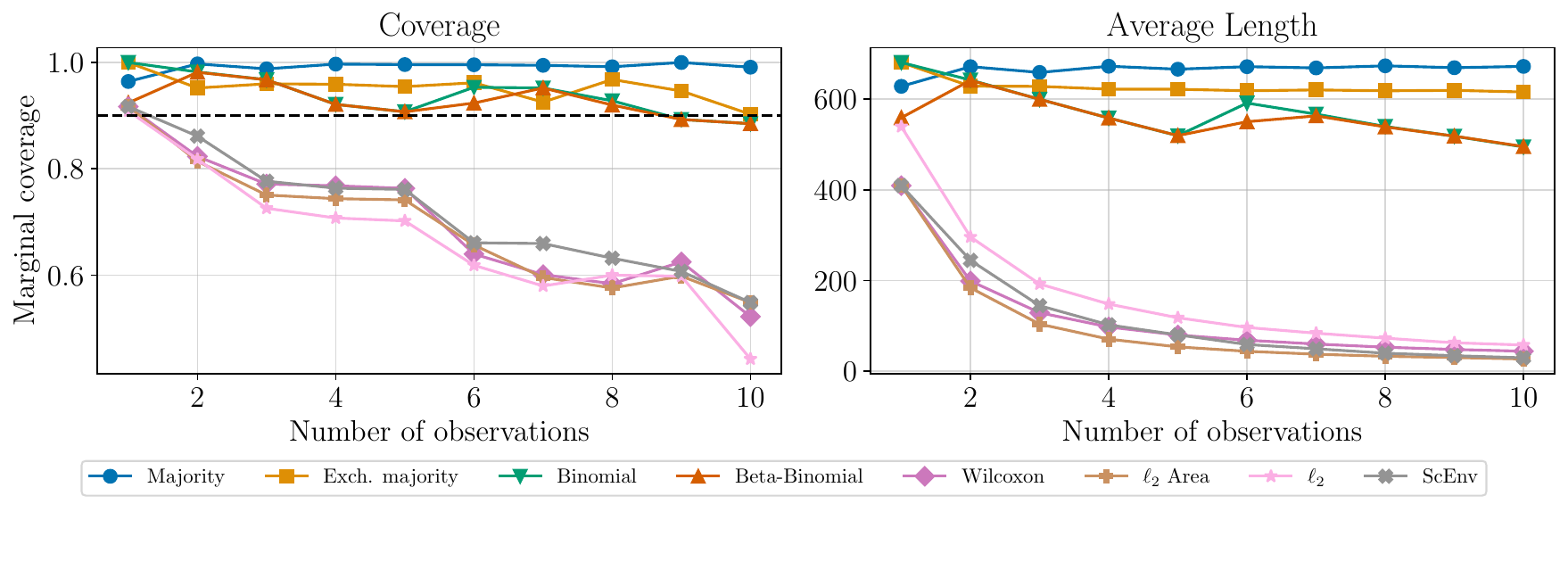}
    \vspace{-2em}
    \caption{Empirical coverage and average lengths for the LifeClef dataset with multi-input breaking exchangeability. }
    \label{fig:cov_lgth_trueobs}
\end{figure}
The coverage and the average length of Figure~\ref{fig:cov_lgth_trueobs} are evaluated for a number of multi-input observation from $4102$ for $m=1$ to $112$ to $m=10$. The temperature parameter is equal to $1$. The loss of exchangeability causes a clear decrease in coverage of methods based on p-values aggregation.

\section{Probabilistic results}


\subsection{Total variation distance between Binomial and Beta-Binomial distributions}
The following Lemma gives a bound on the total variation distance between a Binomial and a Beta-Binomial distribution.
\begin{lemma}[\citealp{teerapabolarn2008bound}]\label{lem:dtv_binbetabin}
	Let $m \in \mbn$, $\alpha,\beta \in \mbr_+$, if $p = \frac{\alpha}{\alpha + \beta}$ and $q= 1-p$, then:
	\[ d_{TV}\paren[2]{ \cB(m,p),\BBin(m,\alpha,\beta)} \leq \paren{1 - p^{m+1}- q^{m+1}} \frac{m(m-1)}{(m+1)(1+\alpha+\beta)}\,.\]
\end{lemma}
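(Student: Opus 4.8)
The plan is to prove this by Stein's method for the binomial distribution, exploiting the mixture representation $\BBin(m,\alpha,\beta) = \cB(m,Q)$ with $Q \sim B(\alpha,\beta)$ and mean $\e{Q} = p = \alpha/(\alpha+\beta)$. First I would write $d_{TV}(\cB(m,p),\BBin(m,\alpha,\beta)) = \sup_{A \subseteq \intr{0:m}} |\prob{\cB(m,p)\in A} - \prob{W\in A}|$ with $W\sim\BBin(m,\alpha,\beta)$, and introduce the binomial Stein operator $\mathcal{T}g(k) = (m-k)p\,g(k+1) - k(1-p)g(k)$, whose characterizing property is $\e{\mathcal{T}g(\cB(m,p))} = 0$. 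For each $A$, let $g=g_A$ solve the Stein equation $\mathcal{T}g(k) = \bm{1}\set{k\in A} - \prob{\cB(m,p)\in A}$; evaluating at $W$ then gives $\prob{\cB(m,p)\in A} - \prob{W\in A} = \e{\mathcal{T}g(W)}$, so it suffices to bound $\sup_A |\e{\mathcal{T}g_A(W)}|$.

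The core computation evaluates $\e{\mathcal{T}g(W)}$ against the random parameter. Conditioning on $Q=u$ (so $W\sim\cB(m,u)$) and subtracting the identically-zero quantity $\e{\mathcal{T}_u g(W)\mid Q=u}$, where $\mathcal{T}_u$ is the Stein operator of $\cB(m,u)$, the conditional expectation simplifies to $(p-u)\,\e{(m-W)g(W+1) + Wg(W)\mid Q=u}$. Applying the elementary binomial identities $\e{(m-W)g(W+1)\mid Q=u} = m(1-u)\e{g(\cB(m-1,u)+1)}$ and $\e{Wg(W)\mid Q=u} = mu\,\e{g(\cB(m-1,u)+1)}$, this collapses to $m(p-u)\,G(u)$ with $G(u) := \e{g(\cB(m-1,u)+1)}$. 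Taking expectation over $Q$ and using $\e{Q}=p$ yields $\e{\mathcal{T}g(W)} = m\,\e{(p-Q)G(Q)} = -m\,\mathrm{Cov}(Q,G(Q))$.

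It then remains to bound this covariance. Writing $\mathrm{Cov}(Q,G(Q)) = \e{(Q-p)(G(Q)-G(p))}$ and using $G(Q)-G(p) = \int_p^Q G'(t)\,dt$, I get $|\mathrm{Cov}(Q,G(Q))| \le \norm{G'}_\infty\, \e{(Q-p)^2} = \norm{G'}_\infty\, \mathrm{Var}(Q)$, with $\mathrm{Var}(Q) = \alpha\beta/[(\alpha+\beta)^2(\alpha+\beta+1)] = pq/(1+\alpha+\beta)$. The derivative-of-a-binomial-expectation identity $G'(u) = (m-1)\e{\Delta g(\cB(m-2,u)+1)}$, where $\Delta g(k) = g(k+1)-g(k)$, gives $\norm{G'}_\infty \le (m-1)\norm{\Delta g}_\infty$. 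Combining everything, $|\e{\mathcal{T}g(W)}| \le m(m-1)\norm{\Delta g}_\infty\, pq/(1+\alpha+\beta)$.

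The final and most delicate ingredient is the uniform bound on the first difference of the Stein solution (the binomial ``magic factor''): $\norm{\Delta g_A}_\infty \le \frac{1-p^{m+1}-q^{m+1}}{(m+1)pq}$, uniformly over $A$. This is the main obstacle, since it requires solving the Stein recursion explicitly and estimating the resulting alternating sums; I would either invoke Ehm's classical bound or reproduce its derivation. Substituting it cancels the $pq$ factors and produces exactly $\frac{m(m-1)(1-p^{m+1}-q^{m+1})}{(m+1)(1+\alpha+\beta)}$, and taking the supremum over $A$ completes the proof.
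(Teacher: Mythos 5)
Your proof is correct, but note that there is no internal proof to compare it against: the paper imports Lemma~\ref{lem:dtv_binbetabin} verbatim by citation from \citet{teerapabolarn2008bound} and never proves it. Your Stein-method argument is essentially a reconstruction of the standard derivation behind that citation, and the computations check out: the difference of the two Stein operators does collapse to $(p-u)\brac{(m-k)g(k+1)+kg(k)}$; the two binomial identities sum to $m\,G(u)$ with $G(u)=\e{g(\cB(m-1,u)+1)}$, so that $\e{\mathcal{T}g(W)}=m\,\e{(p-Q)G(Q)}=-m\,\mathrm{Cov}\paren{Q,G(Q)}$ since $\e{Q}=p$; the beta variance is indeed $pq/(1+\alpha+\beta)$; and $G'(u)=(m-1)\e{\Delta g(\cB(m-2,u)+1)}$ is the standard formula for differentiating a binomial expectation in its success parameter, giving $\norm{G'}_\infty\leq (m-1)\norm{\Delta g}_\infty$. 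Plugging in Ehm's uniform bound $\norm{\Delta g_A}_\infty \leq (1-p^{m+1}-q^{m+1})/((m+1)pq)$, the $pq$ factors cancel and the stated constant appears exactly. The one genuine external dependency is that magic-factor bound, which you correctly single out as the crux; invoking Ehm (1991) there is legitimate and no weaker than what the paper itself does. Two cosmetic points: your Stein equation convention actually yields $\e{\mathcal{T}g_A(W)}=\prob{W\in A}-\prob{\cB(m,p)\in A}$ (sign flipped from what you wrote), which is immaterial after taking absolute values and the supremum over $A$; and your argument tacitly assumes $pq>0$, but the degenerate cases $\alpha=0$ or $\beta=0$ (where the paper's convention makes $\BBin$ a Dirac coinciding with $\cB(m,p)$) and $m\in\{0,1\}$ are trivial, with both sides of the claimed inequality equal to zero.
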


Let us apply this bound to the conformal setting where $\BBin$ has parameters $m$, $k_\alpha$ and $n+1 - k_\alpha$ with $k_\alpha = \lceil (n+1)(1-\alpha) \rceil$ and $\alpha \in (0,1)$.

\begin{corollary}\label{cor:dtv_binbetabin}
	Let $k_\alpha = \lceil (n+1)(1-\alpha) \rceil$, then
		\begin{equation*} 
		d_{TV}\paren[2]{ \cB\paren[1]{m,\nicefrac{k_\alpha}{n+1}},\BBin\paren[1]{m,k_\alpha,n+1 - k_\alpha}} 
		\leq \frac{(m-1)\min(1,m\alpha)}{n+2}\,.
		\end{equation*}
\end{corollary}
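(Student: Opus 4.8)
The plan is to invoke Lemma~\ref{lem:dtv_binbetabin} with the specific parameter choice $\alpha \mapsto k_\alpha$ and $\beta \mapsto n+1-k_\alpha$, where $k_\alpha = \lceil (n+1)(1-\alpha)\rceil$. With these substitutions the mean $p = \frac{k_\alpha}{k_\alpha + (n+1-k_\alpha)} = \frac{k_\alpha}{n+1}$ matches the Binomial success probability appearing in the corollary, so the two distributions being compared are exactly the ones in the statement. The lemma then yields the bound
\begin{equation*}
d_{TV}\paren[2]{ \cB\paren[1]{m,\nicefrac{k_\alpha}{n+1}},\BBin\paren[1]{m,k_\alpha,n+1-k_\alpha}} \leq \paren{1-p^{m+1}-q^{m+1}}\frac{m(m-1)}{(m+1)(1+(n+1))}\enspace,
\end{equation*}
using $1+\alpha+\beta = 1+k_\alpha+(n+1-k_\alpha) = n+2$. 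So the task reduces to showing the right-hand side is at most $\frac{(m-1)\min(1,m\alpha)}{n+2}$.

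The first simplification is routine: $\frac{m}{m+1}\le 1$ and $1-p^{m+1}-q^{m+1}\le 1$ (since $p,q\in[0,1]$ and both terms are nonnegative), giving the bound $\frac{m(m-1)}{(m+1)(n+2)}\le \frac{m-1}{n+2}$. This already recovers the factor $(m-1)$ and the denominator $n+2$, and matches the $\min(1,\cdot)$ bound in the regime where the minimum equals $1$. The remaining work is to prove the sharper estimate that replaces the residual factor by $m\alpha$ when $m\alpha \le 1$, i.e. to show $\paren{1-p^{m+1}-q^{m+1}}\frac{m}{m+1}\le m\alpha$.

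The key quantitative input here is that $q = 1-p = \frac{n+1-k_\alpha}{n+1} = \frac{n+1-\lceil(n+1)(1-\alpha)\rceil}{n+1} \le \alpha$, since the ceiling satisfies $\lceil(n+1)(1-\alpha)\rceil \ge (n+1)(1-\alpha)$. The main obstacle is bounding the quantity $1-p^{m+1}-q^{m+1}$ by something of order $(m+1)q$. I would write $1-p^{m+1} = (1-p)\sum_{i=0}^{m}p^i = q\sum_{i=0}^{m}p^i \le q(m+1)$, and then drop the nonnegative term $-q^{m+1}$, to obtain $1-p^{m+1}-q^{m+1}\le (m+1)q \le (m+1)\alpha$. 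Multiplying by $\frac{m}{m+1}$ gives the factor $m\alpha$, so $\paren{1-p^{m+1}-q^{m+1}}\frac{m(m-1)}{(m+1)(n+2)}\le \frac{(m-1)m\alpha}{n+2}$.

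Combining the two regimes via the minimum yields the claimed bound $\frac{(m-1)\min(1,m\alpha)}{n+2}$, completing the proof. The only genuinely delicate point is the telescoping/geometric-sum estimate for $1-p^{m+1}$ together with the ceiling inequality $q\le\alpha$; everything else is a matter of chaining the trivial inequalities $\frac{m}{m+1}\le 1$ and $q^{m+1}\ge 0$ in the correct order.
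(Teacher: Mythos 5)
Your proof is correct and follows essentially the same route as the paper's: both apply Lemma~\ref{lem:dtv_binbetabin} with parameters $k_\alpha$ and $n+1-k_\alpha$, drop the $q^{m+1}$ term, use the ceiling inequality (your $q\le\alpha$ is the paper's $p\ge 1-\alpha$), and bound $1-p^{m+1}\le(m+1)\alpha$ before absorbing the factor $\nicefrac{m}{m+1}$ to get $\min(1,m\alpha)$. The only cosmetic difference is that you establish this last estimate via the geometric sum $1-p^{m+1}=q\sum_{i=0}^{m}p^i$ and a two-regime case split, whereas the paper invokes $1-(1-\alpha)^{m+1}\le\min(1,(m+1)\alpha)$ directly.
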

\begin{proof}
	It is directly deduced from Lemma~\ref{lem:dtv_binbetabin}. Just lower bound $q$ by $0$ and $p = \frac{k_\alpha}{n+1}$ by $1-\alpha$ to get that:
	\begin{align*}
	    d_{TV}\paren[2]{ \cB\paren[1]{m,\nicefrac{k_\alpha}{n+1}},\BBin\paren[1]{m,k_\alpha,n+1 - k_\alpha}} &\leq  \paren{1 - (1-\alpha)^{m+1}} \frac{m}{m+1}\frac{(m-1)}{n+2} 
	\end{align*}
	Then let us just remark that:
	\[ \frac{(m-1)m}{m+1}\paren{1 - (1-\alpha)^{m+1}}  \leq \frac{(m-1)m}{m+1}\min(1,(m+1)\alpha ) \leq (m-1)\min(1,\alpha m).\]
\end{proof}

\subsection{Technical lemma}

\begin{lemma}\label{lem:moment_invbin}
	Let $N \sim \cB(n,p)$, then:
	\[ \e{ \frac{1}{N+1}} = \frac{1}{(n+1)p} (1 - (1-p)^{n+1}).\]
\end{lemma}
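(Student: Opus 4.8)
The goal is to compute $\e{1/(N+1)}$ for $N \sim \cB(n,p)$ in closed form. The plan is to write the expectation as a finite sum against the binomial PMF and then recognize the key combinatorial identity that shifts the index by one. Concretely, I would start from
\begin{equation*}
\e[3]{ \frac{1}{N+1}} = \sum_{k=0}^n \frac{1}{k+1}\binom{n}{k} p^k (1-p)^{n-k}\enspace.
\end{equation*}

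The crucial algebraic step is the identity $\tfrac{1}{k+1}\binom{n}{k} = \tfrac{1}{n+1}\binom{n+1}{k+1}$, which follows immediately from expanding both sides into factorials. Substituting this turns the sum into
\begin{equation*}
\e[3]{ \frac{1}{N+1}} = \frac{1}{n+1}\sum_{k=0}^n \binom{n+1}{k+1} p^k (1-p)^{n-k}\enspace.
\end{equation*}
Then I would reindex with $\ell = k+1$ and pull out a factor of $1/p$ to match the exponents, writing $p^k = p^{\ell-1}$ and $(1-p)^{n-k} = (1-p)^{n+1-\ell}$, so that the summand becomes $\tfrac{1}{(n+1)p}\binom{n+1}{\ell} p^\ell (1-p)^{n+1-\ell}$.

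At this point the sum over $\ell$ runs from $1$ to $n+1$, i.e. it is the full binomial expansion of $(p + (1-p))^{n+1} = 1$ \emph{minus} the missing $\ell = 0$ term, which equals $(1-p)^{n+1}$. Collecting this gives
\begin{equation*}
\e[3]{ \frac{1}{N+1}} = \frac{1}{(n+1)p}\paren[2]{ 1 - (1-p)^{n+1}}\enspace,
\end{equation*}
which is exactly the claimed formula.

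This proof is essentially a routine computation, so I do not expect a genuine obstacle; the only point requiring care is the index manipulation, namely correctly matching the powers of $p$ and $(1-p)$ after the shift and remembering to subtract the $\ell=0$ term rather than treating the reindexed sum as already complete. One should also note the formula is valid for $p \in (0,1]$; the $p=0$ case is handled separately since the right-hand side has a removable singularity (its limit as $p\to 0$ is $1$, matching $\e{1/(N+1)} = 1$ when $N \equiv 0$), though in the intended application $p = q(y) > 0$.
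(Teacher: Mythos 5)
Your proof is correct and takes essentially the same approach as the paper: both start from the sum against the binomial PMF, apply the identity $\frac{1}{k+1}\binom{n}{k} = \frac{1}{n+1}\binom{n+1}{k+1}$, shift the index, and identify the result as the full binomial expansion of $1$ missing only the $(1-p)^{n+1}$ term. Your extra remark about the removable singularity at $p=0$ is a small, harmless addition that the paper does not bother with (and indeed does not need, since it applies the lemma with $p = q(y) > 0$).
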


\begin{proof}
By direct computation:
	\begin{align*}
		\e{ \frac{1}{N+1}} &= \sum_{k=0}^{n} \frac{1}{k+1}\binom{n}{k}  p^k(1-p)^{n-k} \\
		&= \sum_{k=0}^{n} \binom{n+1}{k+1} \frac{1}{n+1} p^k(1-p)^{n-k} \\
		&= \frac{1}{(n+1)p} \sum_{k=1}^{n+1} \binom{n+1}{k} p^k (1-p)^{n+1-k}\\
		&= \frac{1}{(n+1)p} \paren{ 1 - (1-p)^{n+1}}\,.
		\end{align*}
where the last equality holds because the sum is almost the full binomial expansion, except for the first term.

\end{proof}

\end{document}